\documentclass{article}

\usepackage{amsmath,amsfonts,bm}

\def\eqref#1{equation~\ref{#1}}

\def\1{\bm{1}}

\def\vb{{\bm{b}}}

\def\vh{{\bm{h}}}

\def\vs{{\bm{s}}}

\def\vx{{\bm{x}}}
\def\vy{{\bm{y}}}
\def\vz{{\bm{z}}}

\def\mH{{\bm{H}}}

\def\mT{{\bm{T}}}

\def\mW{{\bm{W}}}
\def\mX{{\bm{X}}}

\def\mZ{{\bm{Z}}}

\DeclareMathAlphabet{\mathsfit}{\encodingdefault}{\sfdefault}{m}{sl}
\SetMathAlphabet{\mathsfit}{bold}{\encodingdefault}{\sfdefault}{bx}{n}

\def\gG{{\mathcal{G}}}

\def\gL{{\mathcal{L}}}

\def\gO{{\mathcal{O}}}

\def\sD{{\mathbb{D}}}

\def\sR{{\mathbb{R}}}

\newcommand{\parents}{Pa} 

\usepackage{microtype}
\usepackage{graphicx}
\usepackage{subcaption}
\usepackage{booktabs}
\usepackage{pifont}
\usepackage{mathtools}
\usepackage[dvipsnames,table]{xcolor}
\usepackage{amsmath}
\usepackage{amssymb}
\usepackage{amsthm}
\usepackage{bm}
\usepackage{multirow}

\usepackage[accepted]{icml2026}

\makeatletter
\renewcommand{\ICML@appearing}{%
\textit{Mechanistic Interpretability Workshop at the
$\mathit{43}^{rd}$ International Conference on Machine Learning},
Seoul, South Korea, 2026.
Copyright 2026 by the author(s).%
}
\makeatother

\newcommand{\yhlt}[1]{\colorbox{yellow!50}{#1}}
\newcommand{\ghlt}[1]{\colorbox{green!50}{#1}}

\newcommand{\mn}[1]{\texttt{CircuitLasso#1}}

\definecolor{aliceblue}{rgb}{0.94, 0.97, 1.0}

\usepackage{hyperref}

\usepackage[capitalize,noabbrev]{cleveref}

\theoremstyle{plain}
\newtheorem{theorem}{Theorem}[section]
\newtheorem{proposition}[theorem]{Proposition}

\theoremstyle{definition}

\theoremstyle{remark}

\icmltitlerunning{Scalable Circuit Learning for Interpreting LLMs}

\begin{document}

\twocolumn[
\icmltitle{Scalable Circuit Learning for Interpreting Large Language Models}

\icmlsetsymbol{equal}{*}

\begin{icmlauthorlist}
\icmlauthor{Naiyu Yin}{aff1}
\icmlauthor{Dennis Wei}{aff2}
\icmlauthor{Tian Gao}{aff2}
\icmlauthor{Amit Dhurandhar}{aff2}
\icmlauthor{Karthikeyan Natesan Ramamurthy}{aff2}
\icmlauthor{Yue Yu}{aff1}
\end{icmlauthorlist}

\icmlaffiliation{aff1}{Lehigh University, Mathematics Department, Bethlehem, USA}
\icmlaffiliation{aff2}{IBM Research, Yorktown Heights, USA}

\icmlcorrespondingauthor{Naiyu Yin}{nay224@lehigh.edu}

\icmlkeywords{Mechanistic Interpretability, Circuit Discovery, Sparse Autoencoders}

\vskip 0.3in
]

\printAffiliationsAndNotice{}

\begin{abstract}
A prominent research direction in mechanistic interpretability is learning sparse circuits over LLM components to reveal how they jointly produce model behavior.  However, raw neurons are polysemantic, making learned circuits hard to interpret.  Sparse autoencoder (SAE) features alleviate this, but their high dimensionality makes existing intervention-based circuit learning methods computationally prohibitive.  We propose \mn{}, a scalable circuit learning approach based on sparse linear regression.  \mn{} recovers circuits whose structural accuracy matches that of state-of-the-art intervention-based methods on the benchmark data, at a fraction of the computational cost.  For interpretability, \mn{} efficiently uncovers relationships among SAE features, showing how human-interpretable semantic features propagate through the model and influence its predictions.  Finally, we validate the utility of our learned circuits by leveraging their insights to achieve comparable performance at substantially lower cost on a domain-generalization task.
\end{abstract}

\section{Introduction}
The fundamental challenge of mechanistic interpretability is to understand the ``why'' behind the behaviors of large language models (LLMs). A key technique involves discovering circuits, which are compact subgraphs connecting key components within the model (such as attention heads and neurons) that drive a specific behavior or capability~\citep{meng2022locating,geva2023dissecting,nanda2023attribution}. However, existing methods for circuit learning often face a
bottleneck. The raw components of an LLM, such as individual neurons, are known to be polysemantic~\citep{elhage2022toy}, meaning that a single neuron can be activated by and contribute to multiple, seemingly unrelated concepts. This polysemanticity renders the learned circuits dense, noisy, and challenging for humans to interpret, undermining the very goal of mechanistic interpretability.

The limitations of using raw, polysemantic neurons have motivated a shift toward a more promising foundation for circuit analysis based on Sparse Autoencoders (SAEs) and related tools~\citep{bricken2023monosemantic,cunningham2023sparse}. SAEs are neural networks trained to reconstruct the activations of an LLM's raw components using a high-dimensional but sparse set of ``features''. Remarkably, these SAE features tend to be monosemantic, i.e., each feature consistently activates for a single, human-interpretable concept, such as ``related to sports,'' ``a specific emotion,'' or ``a particular grammatical structure.'' The monosemanticity of SAE features has the potential not only to enhance interpretability in itself but also to yield sparser, cleaner structural graphs, and perhaps more faithful representations of the model's internal processing.

\begin{figure*}[hpt]
    \centering
    \includegraphics[width=1\linewidth]{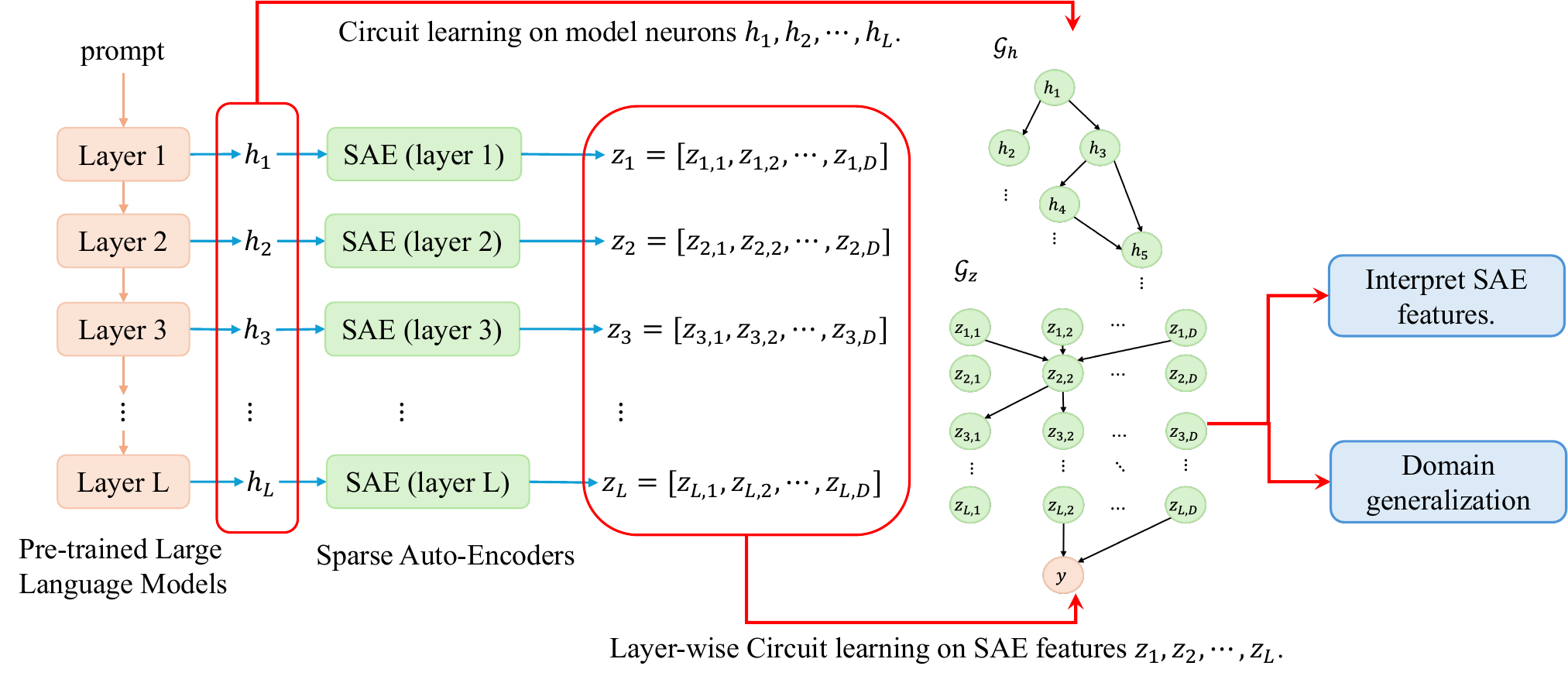}
    \caption{An illustration of our model neuron activation and SAE feature collection procedure, learned circuits, and potential downstream tasks.}
    \label{fig:pipeline}
\end{figure*}

Our work is motivated by the above
potential of
SAE features to achieve greater
interpretability in LLM circuit analysis. However, existing circuit learning methods~\citep{vig2020causal,meng2022locating,nanda2023attribution,syed2024attribution,kramar2024atp,hanna2024have}, many of which are designed for the lower-dimensional space of raw neurons, struggle to scale
to the high-dimensional feature space of SAEs. The computational complexity and the risk of finding spurious correlations increase dramatically. To address this, we
introduce a novel approach to handle the high dimensionality. Our method, \mn{}, utilizes the Lasso (i.e., $\ell_1$-penalized linear regression) to find a sparse set of connections between features that explains the model's behavior. Sparse linear regression is well-suited for high-dimensional data, as it is computationally efficient and the sparsity translates to more interpretable circuits.

An advantage of our approach is its use of observational data only. This broadens its applicability and addresses the scalability issue of the existing intervention-based approaches, whose cost scales with LLM size. 
We quantify the efficiency advantage of our regression-based method through a theoretical analysis of its computational cost compared to state-of-the-art intervention-based approaches, establishing conditions under which our method guarantees greater efficiency.

We empirically evaluate \mn{} against state-of-the-art baselines on a circuit learning benchmark, demonstrating substantial improvements in efficiency while accurately capturing circuits involving LLM components. Leveraging its scalability, we then apply our method to SAE features and obtain human-interpretable circuits for the CoLA dataset, which has not been used before in mechanistic interpretability studies. Finally, leveraging insights from the learned circuits, we achieve comparable accuracy at substantially lower cost on a downstream domain-generalization task.

Our primary contributions are as follows: 1) Inspired by the continuous formulation of causal discovery, we propose a sparse regression surrogate for circuit discovery, \mn{}, and theoretically analyze its computational cost compared to intervention-intensive approaches.
2) \mn{} facilitates operating on monosemantic but high-dimensional SAE features, potentially offering clearer explanations of how human-interpretable concepts propagate through LLMs. 3) Extensive experiments across LLMs (up to 9B parameters), SAEs, and datasets demonstrate that \mn{} achieves efficiency at parity of accuracy on circuit discovery benchmarks and yields useful insights for a downstream generalization task.

\section{Related Work}
\textbf{Mechanistic Interpretability.} The established work in mechanistic interpretability explains behaviors in terms of raw or coarse-grained model components.  \citet{olsson2022context} implicated induction heads in in-context learning, while others \citep{meng2022locating,geva2023dissecting,nanda2023fact} examined MLP modules for factual recall. However, due to the polysemantic nature of raw neurons and coarse-grained components~\citep{elhage2022toy}, the resulting mechanistic insights are often difficult to apply to downstream tasks. Some prior methods~\citep{geiger2023causal,zou2023representation} attempt to address this issue by fitting model internals to pre-defined hypotheses using curated data, but these approaches fail to generalize to scenarios where researchers lack expert knowledge or cannot anticipate how models implement specific behaviors. Recent work~\citep{bricken2023monosemantic,cunningham2023sparse} leverages advances in dictionary learning for interpretability and introduces sparse autoencoders (SAEs) to identify sparse, disentangled features in high-dimensional spaces that align with human-interpretable concepts. Building on this, a number of advanced strategies for learning SAE features have been proposed~\citep{rajamanoharan2024jumping,gao2024scaling,bussmann2024batchtopk,dunefsky2024transcoders}. Despite this progress, existing mechanistic interpretability methods continue to face challenges in scaling to the high-dimensional SAE feature space.

\textbf{Circuit Learning.} Intervention-based circuit learning approaches, including causal mediation analysis~\citep{vig2020causal,geva2023dissecting,hanna2024have}, causal tracing~\citep{meng2022locating}, and activation patching~\citep{nanda2023attribution,syed2024attribution}, quantify the influence between model components via counterfactual interventions. However, these methods are computationally intensive and struggle to scale to large component sets, particularly the high-dimensional space of sparse autoencoder (SAE) features. \citet{jafari2025relp} improves the \emph{faithfulness} of attribution patching by replacing local gradients with Layer-wise Relevance Propagation coefficients, yet the efficiency bottleneck remains. \citet{markssparse} propose efficient approximations for SAE features but rely on heuristic pre-processing such as clustering in high-dimensional settings. As an alternative direction, \citet{laptevanalyze} construct circuit graphs in a data-free manner from SAE decoder weights. Although many prior works borrow causal concepts~\citep{meng2022locating}, the broader causal discovery literature remains largely unexplored for circuit learning; a notable exception is \citet{conmy2023towards}, who iteratively prune edges from the computation graph, reminiscent of constraint-based algorithms such as the PC algorithm~\citep{pearl2000models}.

\textbf{Per-prompt vs.\ population-level circuits.} All data-driven circuit-learning approaches above, including \citet{markssparse}, \citet{syed2024attribution}, and \citet{hanna2024have}, aggregate effects across a dataset of prompts and recover a single population-level circuit per task; recent work also explores attribution graphs at per-prompt granularity.  \citet{ameisen2025circuit} introduce a circuit-tracing methodology that constructs \emph{per-prompt} attribution graphs over transcoder features, and \citet{lindsey2025biology} apply this methodology to a series of case studies tracing how a frontier language model processes specific inputs.  \mn{} is complementary to these per-prompt methods: by aggregating observational sparse-regression coefficients across many prompts, it recovers a single \emph{population-level} dependency skeleton, which is the appropriate object when the downstream target is dataset-wide interpretation, model editing, or domain generalization (as in Section~\ref{sec:dg}).  The two granularities, per-prompt mechanism and population-level skeleton, answer different questions about the same model and can be used together to triangulate when, for which inputs, and through which features a given behavior arises.

\section{Circuit Learning Framework and Methodology}
\subsection{Circuit Discovery via Sparse Linear Regression}\label{sec:general_formulation}
State-of-the-art approaches quantify the importance of hidden representations or computational graph edges by estimating their causal effects, particularly indirect effects, using techniques such as causal mediation analysis~\citep{vig2020causal}, causal tracing~\citep{meng2022locating}, attribution patching~\citep{nanda2023attribution,syed2024attribution}, and related variants~\citep{kramar2024atp,hanna2024have}. These approaches share some similarities with constraint-based causal discovery, which assesses potential edges among variables via independence tests and retains those with strong dependencies. However, constraint-based causal discovery methods are known to face scalability challenges, and circuit discovery methods share this limitation since they must separately quantify the importance of every hidden representation and edge in the computational graph, which can quickly become infeasible with larger models.

We adopt a sparse linear regression framework, structurally analogous to a linear SEM in the continuous causal discovery literature, as a tractable surrogate for the LLM's nonlinear computational graph. \textbf{The goal of this surrogate is not to recover fine-grained per-edge causal effects of the underlying nonlinear computation, but to identify, at scale, the dependency skeleton of the circuit, i.e., which components influence which others.}
Assume we extract $N$ components (which may be MLP neurons, attention heads, or SAE features) from all the desired locations in the LLMs and concatenate {their activations} to form a vector $\vx = [x_1, x_2, \cdots, x_N] \in \sR^N$. Our goal is to learn the DAG $\gG$ with the $N$ components as its nodes.
The structural equation models (SEMs) 
model the structural relations between a component $x_i$ and its parents $\parents_\gG(x_i)$: $x_i = f_i(\parents_\gG(x_i)) + \varepsilon_i$, where $f_i(\cdot)$ is the structural function and $\varepsilon_i$ is the residual. In this work, we assume the structural relations between components are linear. Given $M$ observations of the $N$ components, i.e., input matrix $\mX \in \sR^{N\times M}$, we can then obtain the linear SEM in its matrix form:
\begin{equation}\label{eq:linearSEM}
    \scalebox{1}{$
    \mX = A^\top \mX + \bm{\varepsilon},
    $}
\end{equation}
with continuous parameters $A\in \sR^{N\times N}$, a weighted adjacency matrix; $\bm{\varepsilon} \in \sR^{N\times M}$ is a matrix of linearization errors.
$A[i,j] \neq 0$ indicates the directed dependency $x_i \to x_j$. We aim to learn $A$ by minimizing the reconstruction error between $\mX$ and $A^\top \mX$ subject to sparsity and acyclicity constraints:
\begin{equation}\label{eq:circuit_prob}
    \scalebox{1}{$
    \begin{split}
        \widehat{A} =& \arg\min_{A} \|\mX - A^\top\mX\|_F^2 + \lambda \|A\|_1, \\
        & \text{subject to } \mathcal{G}(A) \in \sD
    \end{split} 
    $} 
\end{equation}
where $\|\cdot\|_F$ denotes Frobenius norm; $\|A\|_1$ is the sparsity penalty with tuning coefficient $\lambda$; $\mathcal{G}(A)$ is the circuit structure inferred from $A$; and $\sD$ is the space of acyclic graphs with $N$ nodes.

\textbf{Discussion of Assumptions.} We use Eq.~(\ref{eq:linearSEM}) as a sparse-regression \emph{surrogate} for the LLM's nonlinear computational graph, not as a strict SEM. Accordingly, we do not appeal to identifiability theorems that would require causal sufficiency, independent noise, or a correctly specified linear functional form: all three hold only approximately in transformer LLMs given the residual stream, SAE reconstruction error, and the inherent nonlinearity of attention, LayerNorm, and MLP layers. In particular, $\varepsilon$ should be interpreted as \emph{linearization error} rather than exogenous noise, since LLM activations are deterministic. Our goal is to recover the dependency \emph{skeleton} over modeled components efficiently. The dependency skeleton is a sparse summary of the linear-projection structure among the modeled components, with unmodeled contributions absorbed into the residual term. The $\ell_1$ penalty filters out weak dependencies regardless of whether they originate from omitted parents or from genuine but small effects, so the recovered skeleton remains a useful and conservative map of the strongest dependencies that exist among the components included in the model. We also provide a nonlinear extension of \mn{}, which empirically yields similar topological skeletons at higher computational cost. Please refer to Appendix \ref{app:assumptions} for a detailed discussion.


The main computational challenge of the optimization in Eq.~(\ref{eq:circuit_prob}), as in many continuous causal discovery approaches, lies in enforcing the acyclicity constraint. Such a constraint is essential to prevent self-loops and cycles, which are unsuitable for interpreting the transmission, aggregation, and evolution of model components. To address this, we make simplifying assumptions that bypass the explicit enforcement of this constraint and reduce the optimization to sparse linear regression problems (i.e., Lasso), enabling a scalable solution.


\subsection{Circuit Discovery on Neurons}\label{sec:cd_neuron}
To better understand how models encode and process information, mechanistic interpretability research~\citep{conmy2023towards,cao2021low,syed2024attribution} has focused on identifying graphical structures (circuits) connecting pre-trained language model neurons, including outputs from attention and MLP modules. To evaluate the effectiveness of our proposed method in Section~\ref{sec:general_formulation}, we follow the same setting as these prior works and treat model neurons as the components of interest. We first collect neuron activations $[\vh_1, \vh_2, \ldots, \vh_L]$ from $L$ target locations, each with dimension $d$, and for $M$ LLM inputs (observations), resulting in $\mH \in \sR^{L\times d\times M}$. Existing circuit discovery methods typically assume that circuit structures respect the model locations' computation order, meaning that neurons from layer $i$ precede those from layer $j$ if $i < j$, and within each layer, attention activations come before MLP activations. We adopt this assumption to simplify the acyclicity constraint in Eq.~(\ref{eq:circuit_prob}). Accordingly, we reorder $\mH$ to match the computational graph and reshape it to obtain $\Tilde{\mH}\in\sR^{N\times M}$, whereby $N=Ld$. Substitute $\mX$ in Eq.~(\ref{eq:circuit_prob}) with $\Tilde{\mH}$ to estimate the weighted adjacency matrix $A$ as
\begin{equation}\label{eq:neuron_obj}
    \begin{split}
        & \widehat{A} = \arg\min_{A} \|\Tilde{\mH} - A^\top \Tilde{\mH}\|_F^2 + \lambda \|A\|_1, \\
    & \text{subject to } A \text{ being block upper triangular.}
    \end{split}
\end{equation}
Specifically, each block $A[i,j]$ is now a $d \times d$ square matrix, and $A[i,j] = 0^{d \times d}$ whenever $i \geq j$. This block upper triangular structure ensures that each block of variables depends only on preceding blocks, so later layers cannot influence earlier ones, thereby preserving the computational ordering without requiring an explicit acyclicity constraint. This constraint exploits the known, human-engineered computational order of the LLM. The underlying architectural insight provides a justifiable acyclic constraint that aligns with the inherent feed-forward nature of the network (activations in later layers are computed after, and depend on, those in earlier layers). Circuit learning with such a justified acyclicity leads to more accurate identification of the underlying structural dependencies between model components, as demonstrated by empirical results in Figure~\ref{fig:circuit_discovery_model_neuron}.
In practice, we enforce this constraint by initializing the lower-triangular blocks to zero matrices and keeping them fixed throughout optimization. The resulting circuit structure $\mathcal{G}$ is then inferred from $\widehat{A}$.

We now provide complexity analysis of our proposed circuit discovery approach on model neurons versus the existing intervention-based approaches. For our optimization problem in Eq.~(\ref{eq:neuron_obj}), we have:
\begin{proposition}\label{our_bigo_model_neuron}
    \textbf{Computational complexity of \mn{} on model neurons.}
    Consider the block upper-triangular Lasso problem in Eq.~(\ref{eq:neuron_obj}), with $\tilde{\mH} \in \sR^{N \times M}$ containing $M$ observations of $N = Ld$ neuron activations from $L$ positions of width $d$.  The objective is the sum of a smooth quadratic data-fit term and a non-smooth $\ell_1$ regularizer; on this composite objective, FISTA~\citep{beck2009fast} reaches $\epsilon$-suboptimality in $\mathcal{O}(1/\sqrt{\epsilon})$ iterations, matching the optimal rate of Nesterov's accelerated gradient method on smooth convex problems.  The total computational cost is therefore $\mathcal{O}\!\left(\frac{M\,L(L-1)\,d^{2}}{2\,\sqrt{\epsilon}}\right)$.
\end{proposition}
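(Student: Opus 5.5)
We multiply two factors: the number of FISTA iterations needed for $\epsilon$-suboptimality, and the cost of one iteration restricted to the free (upper-triangular block) entries of $A$. For the first factor we write Eq.~(\ref{eq:neuron_obj}) as $F(A)=g(A)+h(A)$ with $g(A)=\|\tilde{\mH}-A^\top\tilde{\mH}\|_F^2$ and $h(A)=\lambda\|A\|_1$. The constraint is handled by restricting the variable to the linear subspace $\mathcal{S}$ of block upper-triangular matrices. We parametrize only those entries, which is exactly how the paper enforces the constraint in practice by fixing the lower blocks at zero.

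On $\mathcal{S}$, $g$ is convex and quadratic. Its gradient $\nabla g(A)=2\tilde{\mH}(\tilde{\mH}^\top A-\tilde{\mH}^\top)$, projected onto $\mathcal{S}$, is Lipschitz with constant $L_g\le 2\|\tilde{\mH}\|_2^2$. The function $h$ is convex, and its proximal map (entrywise soft-thresholding) is separable, so it preserves $\mathcal{S}$. The classical bound of \citet{beck2009fast} therefore applies:
\[
F(A_k)-F^\star\le \frac{2L_g\|A_0-A^\star\|_F^2}{(k+1)^2}.
\]
Hence $k=\mathcal{O}(1/\sqrt{\epsilon})$ iterations suffice, with the constants $L_g$ and the initial distance treated as problem-dependent and suppressed in the $\mathcal{O}$.

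For the per-iteration cost, note that block column $j$ of $A$ has nonzero blocks $A[i,j]$ only for $i<j$. The residual for layer $j$ is $\tilde{\mH}_j-\sum_{i<j}A[i,j]^\top\tilde{\mH}_i$, and each product costs $\mathcal{O}(d^2M)$. The number of free blocks is $\sum_{j=1}^{L}(j-1)=L(L-1)/2$, so forming all residuals costs $\mathcal{O}(M\,d^2\,L(L-1)/2)$. The gradient block $\nabla_{A[i,j]}g=-2\tilde{\mH}_i R_j^\top$ is again a $d\times d$ product over $M$ samples, costing $\mathcal{O}(d^2M)$ per free block, which gives the same total. The remaining operations are lower order. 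Soft-thresholding and the Nesterov extrapolation step cost $\mathcal{O}(L(L-1)d^2/2)$, and the momentum scalar update costs $\mathcal{O}(1)$. Multiplying the per-iteration cost $\mathcal{O}(M L(L-1)d^2/2)$ by the iteration count gives the claimed $\mathcal{O}\!\left(\frac{ML(L-1)d^2}{2\sqrt{\epsilon}}\right)$.

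The main obstacle is justifying the iteration bound on the constrained problem without extra projection cost. The bound requires that the restriction to $\mathcal{S}$ is exactly a lower-dimensional unconstrained Lasso. This holds because $\mathcal{S}$ is a coordinate subspace, so projecting onto it means zeroing entries, which commutes with soft-thresholding. A secondary subtlety is that the statement's constant $1/2$ counts only the free blocks. To keep it, one must avoid forming the full $N\times N$ Gram matrix (an $\mathcal{O}(N^2M)$ precomputation) or explicitly argue that Gram blocks are used only for $i<j$. We take the direct residual-based computation, which attains the stated count.
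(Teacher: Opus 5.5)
Your proposal is correct and follows essentially the same argument as the paper. Both multiply the Beck--Teboulle $\mathcal{O}(1/\sqrt{\epsilon})$ iteration bound by a per-iteration gradient cost of $\mathcal{O}(d^2M)$ per free $d\times d$ block, summed over the $L(L-1)/2$ blocks with $i<j$; the only cosmetic difference is that the paper first splits Eq.~(\ref{eq:neuron_obj}) into $L$ independent multi-output Lasso subproblems (one per output layer, with $p_\ell=(\ell-1)d$ predictors and its own Lipschitz constant), whereas you run a single FISTA on the coordinate subspace of free blocks, which is equivalent because the objective separates over block columns.
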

Please refer to the detailed proof in Appendix~\ref{proof_bigo_neuron}. We compare the cost of \mn{} against EAP-ig\footnote{We omit the theoretical and
empirical comparison to RelP~\citep{jafari2025relp}, as it inherits the runtime of EAP-ig and our
reported efficiency advantage over EAP-ig already characterizes the
expected gap against RelP.}~\citep{hanna2024have}, the state-of-the-art intervention-based circuit discovery method.  EAP-ig estimates indirect effects (IEs) over $L$ component locations of dimension $d$ via \emph{attribution patching}, which linearizes and parallelizes the computation so its additional bookkeeping scales only as $\mathcal{O}(MLd)$.  The dominant cost is therefore the two forward and one backward passes through the LLM that EAP-ig requires per observation.  \mn{}, by contrast, runs no backward passes and shares only the initial forward pass (used for activation collection) with EAP-ig.  We next quantify this gap on the same problem (Eq.~\ref{eq:neuron_obj}) and identify regimes in which \mn{} is provably faster.

\begin{proposition}\label{efficient_conditions}
    Consider a transformer-based large language model with $S$ blocks and model neuron dimension $d$. $n_{\text{token}}$ is the token sequence length, $h$ is the number of attention heads, and $f$ is the feedforward expansion factor. Beyond the shared cost, EAP-ig requires an additional cost of $\mathcal{O}\!\left(12MSn_{\text{token}}d^2(2+f) + 12MSn^2_{\text{token}}d + MLd\right)$, while \mn{} with computational complexity $\mathcal{O}\!\left(ML(L-1)d^2\,/\,2\sqrt{\epsilon}\right)$ has guaranteed better efficiency compared to EAP-ig if one of the following conditions holds: (1) $L(L-1)\,/\,24\,S(2+f)\sqrt{\epsilon} < n_{\text{token}} \ll d$;
    (2) $n_{\text{token}} \gg d$ and $n_{\text{token}} > \sqrt{L(L-1)\,d\,/\,24\,S\,\sqrt{\epsilon}}$. 
\end{proposition}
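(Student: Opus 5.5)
Our plan has two parts: first derive the EAP-ig cost expression by counting FLOPs of the extra forward and backward passes through a standard transformer block, then compare this expression with the \mn{} cost from Proposition~\ref{our_bigo_model_neuron} in the two asymptotic regimes of $n_{\text{token}}$ versus $d$.

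For the EAP-ig cost, we count multiply-accumulate operations per transformer block for a single observation of length $n_{\text{token}}$. The $Q,K,V$ and output projections cost $4n_{\text{token}}d^2$. The MLP with expansion factor $f$ costs $2f n_{\text{token}}d^2$. The attention score and value aggregation cost $2n_{\text{token}}^2 d$, with the head split into $h$ heads leaving this total unchanged. Converting multiply-accumulates to FLOPs (factor $2$), one forward pass per block costs $\mathcal{O}(4n_{\text{token}}d^2(2+f) + 4n_{\text{token}}^2 d)$, up to the paper's constant conventions. We use the standard rule that a backward pass costs about twice a forward pass. Since the first forward pass is shared with \mn{}, EAP-ig needs one extra forward pass and one backward pass, i.e.\ three forward-equivalents. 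Summing over $S$ blocks and $M$ observations gives $12MSn_{\text{token}}d^2(2+f) + 12MSn_{\text{token}}^2 d$. Adding the $\mathcal{O}(MLd)$ attribution-patching bookkeeping stated before the proposition yields the claimed expression. The main obstacle is this step: the constant $12$ depends on FLOP conventions, so we will state the convention explicitly (forward pass $\approx 4(\cdot)$, backward $\approx 2\times$ forward) and note that it only affects constants.

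For the comparison, we require
\begin{equation*}
\frac{ML(L-1)d^2}{2\sqrt{\epsilon}} < 12MSn_{\text{token}}d^2(2+f) + 12MSn_{\text{token}}^2 d + MLd .
\end{equation*}
It suffices that the left side be smaller than whichever term dominates the right side. We therefore drop $MLd$, which is lower order since $L \le n_{\text{token}} d$ in the relevant regimes.

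\textbf{Case (1): $n_{\text{token}} \ll d$.} Here $n_{\text{token}}^2 d \ll n_{\text{token}} d^2$, so the dominant term is $12MSn_{\text{token}}d^2(2+f)$. Dividing by $Md^2$, the inequality reduces to $L(L-1)/(24S(2+f)\sqrt{\epsilon}) < n_{\text{token}}$, which is exactly condition (1).

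\textbf{Case (2): $n_{\text{token}} \gg d$.} Now the quadratic attention term $12MSn_{\text{token}}^2 d$ dominates. Dividing by $Md$ gives $L(L-1)d/(2\sqrt{\epsilon}) < 12Sn_{\text{token}}^2$, i.e.\ $n_{\text{token}} > \sqrt{L(L-1)d/(24S\sqrt{\epsilon})}$, which is condition (2).

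In both cases the omitted terms are nonnegative, so the inequality only strengthens once they are restored. This gives the claimed guaranteed efficiency under either condition.
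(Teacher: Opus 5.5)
Your proposal is correct and follows the paper's proof essentially step for step. You use the same per-block FLOP count ($4n_{\text{token}}d^2(2+f)+4n_{\text{token}}^2 d$ per forward pass), the same backward $\approx 2\times$ forward convention with one extra forward plus one backward pass giving the factor $12$, and the same two-regime comparison against the FISTA cost of Proposition~\ref{our_bigo_model_neuron}. Your added remark, that beating any single nonnegative right-hand term already suffices, makes the sufficiency direction a bit more explicit than the paper's ``dominant term'' phrasing, but the argument is otherwise identical.
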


Please refer to the detailed proof in Appendix~\ref{proof_efficient_conditions}. Intuitively, \textbf{Proposition~\ref{efficient_conditions}} provides a guideline for selecting the number of component locations of interest ($L$) and determining which circuit discovery approach is more efficient.


\subsection{Circuit Discovery on Sparse Features}
Recent work~\citep{markssparse,laptevanalyze} has shifted circuit discovery from raw neurons to monosemantic SAE features, which yield cleaner, more human-interpretable circuits. However, high SAE dimensions push the cost of intervention-based methods beyond practical limits. We address this by extending our sparse-regression formulation to operate directly on SAE features.
\subsubsection{Preliminaries on Sparse Autoencoders}\label{sec:sae}
Given a model with $d$-dimensional latent space and neuron activations $\vh \in \sR^{d}$, an SAE can represent $\vh$ as a linear combination of sparse features $\vz\in \sR^{D}, D \gg d$:
\[\vz = \sigma(\mW_{\text{enc}} \vh + \vb_{\text{enc}}), \quad \hat{\vh} = \mW_{\text{dec}} \vz + \vb_{\text{dec}},\]
where $\mW_{\text{enc}}, \vb_{\text{enc}}$ are encoder parameters and $\mW_{\text{dec}}, \vb_{\text{dec}}$ are decoder parameters; $\sigma(\cdot)$ is a nonlinear activation function. The SAEs are usually trained by minimizing the reconstruction error between model activations $\vh$ and reconstructed activations $\hat{\vh}$ subject to a sparsity regularizer:
\[\vspace{-1.5 mm}\gL_{\text{SAE}}\coloneqq \|\vh - \hat{\vh}\|_2^2 + \alpha \gL_{\text{reg}}(\vz).\vspace{-1.5 mm}\]
Recent work on SAEs explores various methods to achieve a better trade-off between fidelity and sparsity.
\textbf{However, our focus is to uncover the structural dependencies among learned sparse SAE features, rather than to develop new SAE training methods.} We directly employ pre-trained SAEs on LLMs of various sizes. Details can be found in Appendix~\ref{app:sae}.

\subsubsection{Layer-wise Sparse Feature Circuit Discovery}\label{sec:cd_sae_features}
We follow our formulation on neurons and still assume that structural relations follow the computation order of the underlying model neurons. To be specific, consider two model neurons of dimension $d$ and their activations at locations $i$ and $j$, denoted by $\vh_i, \vh_j \in \sR^d$, where computation at $i$ precedes computation at $j$. We obtain the corresponding SAE features $\vz_i, \vz_j \in \sR^D$ using trained SAEs:
\begin{align*}
    \begin{split}
        \vz_i =& \sigma (\widehat{\mW}_{\mathrm{enc},i} \vh_i + \widehat{\vb}_{\mathrm{enc},i}), \\
        \vz_j =& \sigma (\widehat{\mW}_{\mathrm{enc},j} \vh_j + \widehat{\vb}_{\mathrm{enc},j}). \\
    \end{split}
\end{align*}
If a dependency exists between variables in $\vz_i$ and $\vz_j$, we constrain its direction to be from $i$ to $j$. Given $M$ observations of $\vz_i$ and $\vz_j$, we obtain input data $\mZ_i \in \sR^{D\times M}$ and $\mZ_j\in\sR^{D\times M}$. We estimate these relations by solving:
\begin{equation}\label{eq:layer2layer_obj}
    \widehat{A}_{i,j} = \arg\min_{A_{i,j}} \|\mZ_j - A_{i,j}^\top \mZ_i\|_F^2 + \lambda \|A_{i,j}\|_1, \vspace{-3 mm}
\end{equation}
where $A_{i,j} \in \sR^{D\times D}$. This procedure is repeated for every pair $(i, j)$ where $i$ precedes $j$ in the computation order. \textbf{In particular, learning $A_{i,j}$ for all transformer block outputs in consecutive layers provides insight into how semantic concepts are transferred and propagated, and how they evolve, across the model.} The computational cost of the learning problem in Eq.~(\ref{eq:layer2layer_obj}) is $\gO(\frac{MD^2}{\sqrt{\epsilon}})$.

\begin{figure*}[hpt]
    \centering
    \includegraphics[width=1.0\linewidth]{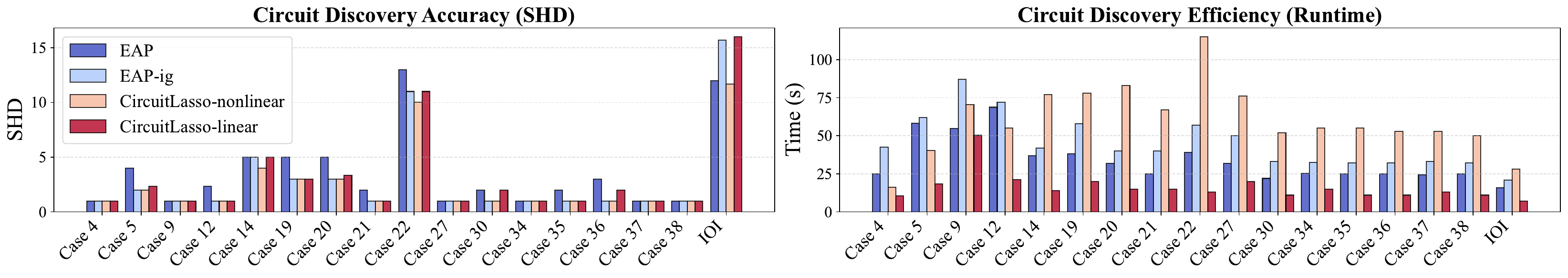}
    \caption{\textbf{Circuit discovery accuracy and efficiency on InterpBench.}
We compare our \mn{} against
the SOTA circuit learning baselines (EAP, EAP-ig) across 16 synthetic cases and the real IOI task. \textbf{Left:} SHD between recovered and ground-truth circuits (lower is better);
\textbf{Right:} wall-clock runtime in seconds (lower is better).}
    \label{fig:circuit_discovery_model_neuron}
\end{figure*}

We also incorporate the downstream prediction target into circuit discovery to enable explanation of the model's predictive behavior. We formulate the following optimization problem to learn a model for predicting the downstream target $y$ using SAE features $\vz_i$, derived from model neuron activations at location $i$:
\begin{equation}\label{eq:layer2y_obj}
    \widehat{A}_{i,y} = \arg\min_{A_{i,y}} \gL_{\text{pred}}(y, A^\top_{i,y}\mZ_i) + \lambda \|A_{i,y}\|_1,
\end{equation}
where $A_{i,y} \in \sR^{D}$ and $\gL_{\text{pred}}(\cdot, \cdot)$ denotes the prediction loss\footnote{Mean squared error for regression tasks and cross-entropy loss for classification tasks.}. In practice, with the learned $A_{i,y}$ and interpretable sparse features $\vz$, we can not only explain the model's predictive behavior, but also rectify the prediction model to mitigate spurious or biased behavior. The computational cost of Eq.~(\ref{eq:layer2y_obj}) is $\gO(\frac{MD}{\sqrt{\epsilon}})$.

\section{Experiments}


We evaluate \mn{} on both model neurons (Section~\ref{sec:cd_neuron_exp}) and sparse autoencoder features (Section~\ref{sec:cd_sf}).  Our central claim is \emph{efficiency at parity of accuracy}: \mn{} recovers circuits whose structural accuracy matches state-of-the-art intervention-based methods, at a fraction of the computational cost.  We then apply \mn{} to a setting whose inner workings have not previously been studied in mechanistic interpretability, and show that the recovered SAE-feature circuits yield interpretable insights.  Finally, we apply our learned circuits to a benchmark domain-generalization (DG) task, achieving comparable, and in some cases slightly better, performance at substantially lower cost.

\subsection{Circuit Discovery on Model Neurons}\label{sec:cd_neuron_exp}
\textbf{Setup.} We evaluate \mn{} on \textsc{InterpBench}~\citep{gupta2024interpbench}, a collection of 86 semi-synthetic transformers with known ground-truth circuits, against the state-of-the-art Edge Attribution Patching (EAP)~\citep{syed2024attribution} and EAP with integrated gradients (EAP-ig)~\citep{hanna2024have}. Following the protocol of \citet{gupta2024interpbench}, we evaluate on the \emph{16 main synthetic cases} and the \emph{real Indirect Object Identification (IOI) case}. Accuracy is measured by Structural Hamming Distance (SHD) and efficiency by runtime in seconds, averaged over three trials on a single NVIDIA A100. The full learning procedure is in Appendix~\ref{app:neurons}.

\noindent\textbf{Results} (Figure~\ref{fig:circuit_discovery_model_neuron}).
\emph{(i) Comparable accuracy at substantially lower cost.} Across the 17 tasks, \mn{}-linear attains a mean SHD of $3.16$, statistically indistinguishable from EAP-ig ($2.98$) and below EAP ($3.61$), at a mean runtime of $16.3$\,s per case, $3.0$ times faster than EAP-ig ($49.1$\,s) and $2.1$ times faster than EAP ($33.7$\,s). In short, \mn{} recovers circuits as accurately as the strongest intervention-based baseline at a fraction of the cost, directly supporting our ``efficiency at parity of accuracy'' claim.
\emph{(ii) The nonlinear variant brings diminishing returns.} \mn{}-nonlinear achieves the lowest mean SHD ($2.84$), only marginally below \mn{}-linear, but at $3.7$ times the runtime, even slower than EAP-ig in most cases. This is consistent with the intuition that the extra nonlinear capacity is wasted when the dependency structure is already well-captured by a linear edge-importance score.

\subsection{Circuit Discovery on SAE Features}\label{sec:cd_sf}
We adapt \mn{} to learn structural circuits on SAE features (as described in Section~\ref{sec:cd_sae_features}), revealing model behaviors in terms of human-interpretable concepts (Section~\ref{sec:interp}). Following~\citet{markssparse}, we also leverage insights from learned circuits for downstream domain generalization (Section~\ref{sec:dg}).

\subsubsection{Interpretability Case Studies}\label{sec:interp}
\textbf{Data and Model.} We demonstrate our approach on the \textbf{Co}rpus of \textbf{L}inguistic \textbf{A}cceptability (CoLA) task~\citep{warstadt2018neural} from the \textsc{glue} benchmark~\citep{wang2018glue}, aiming to reveal the inner workings of GPT-2 small~\citep{radford2019language} through interpretable features derived from OpenAI's pre-trained sparse autoencoders~\citep{gao2024scaling} for GPT-2 small. The CoLA dataset is, to our knowledge, \emph{new} to mechanistic interpretability studies. It contains 10,657 sentences from 23 linguistics publications, annotated for grammaticality by the original authors. We conduct our interpretability experiments on the 8,551 training sentences in the public release.

\begin{figure*}[hpt]
    \centering
    \includegraphics[width=0.85\textwidth]{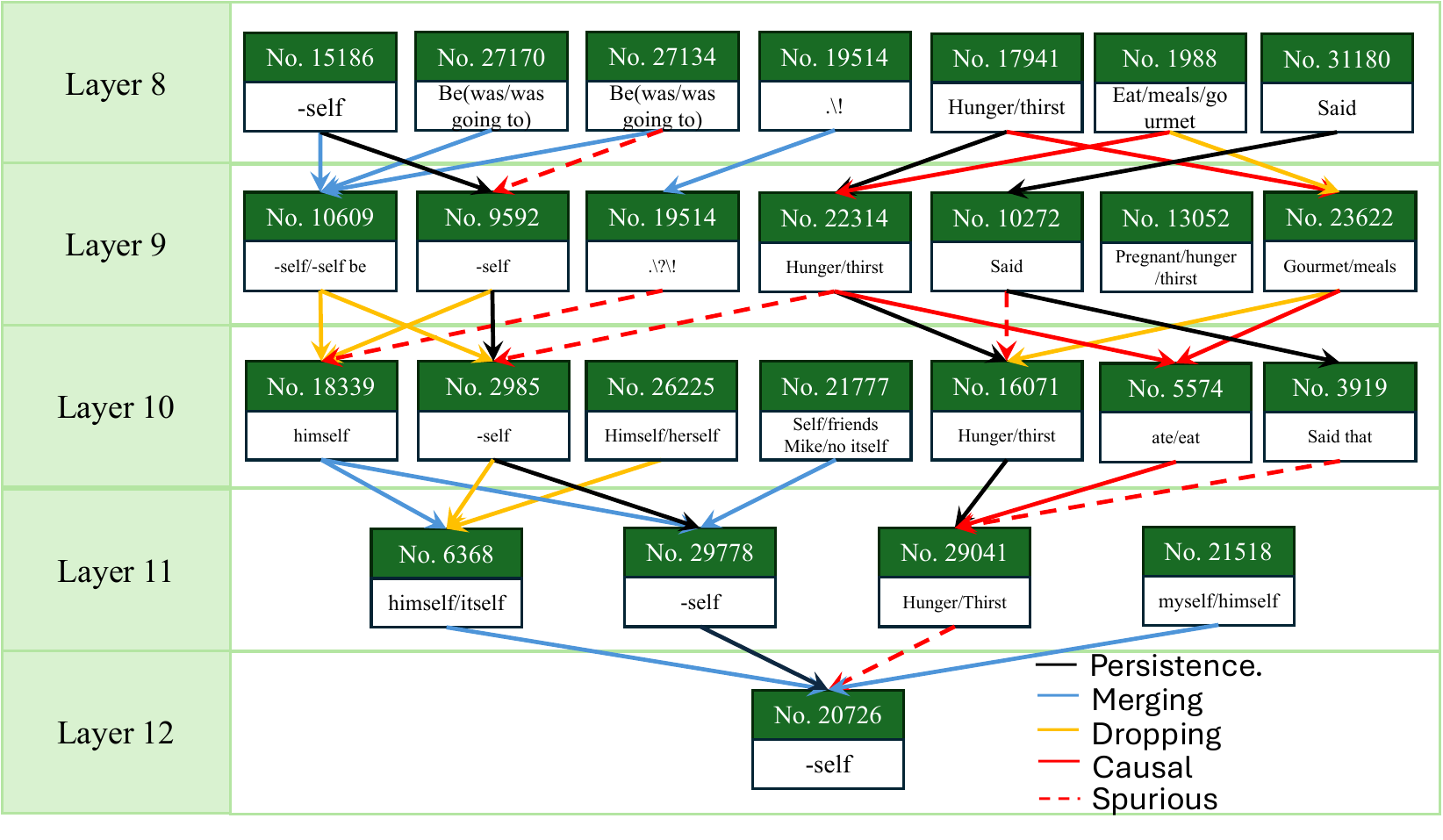}
    \caption{The learned circuits over SAE features on GPT-2 small model. Different colors represent different types of edges in the computation graph.  The figure illustrates one example of a valid circuit path traced through the learned adjacency matrices: starting from four interpretable final-layer features ($z_{12,20726}$, $z_{12,776}$, $z_{12,19322}$, $z_{12,3092}$ labeled ``-self,'' ``hunger/thirst,'' ``tired/weary,'' and ``ending punctuation''), we recurse backward through $A_{i-1,i}$ to surface their most influential parents at each preceding layer.  We do not impose a fixed cap on the number of features per layer or a global coefficient threshold; rather, the figure shows a representative traversal whose roots and parents are chosen for human-interpretable illustration, and an additional traversal starting from a different root is reported in Appendix~\ref{app:sparse_features_circuit}.}
    \label{fig:example_circuit_1}
\end{figure*}
\textbf{Sparse Feature Interpretation within Learned Circuits.}
We extract GPT-2 small's neuron activations and the corresponding SAE features on the $M$ training sentences (Appendix~\ref{app:sae_features}).

We construct circuits at two granularities, controlled by how the importance score is computed.  For the dataset-level interpretability case study reported below (Figure~\ref{fig:example_circuit_1} and Appendix~\ref{app:sparse_features_circuit}), we rank features at the prediction layer by the \emph{dataset-level coefficient} $|A_{L,y}|$ obtained from the regression fit jointly on all $M$ training prompts, yielding a single circuit that summarizes the model's behavior across the dataset.  When the goal is to explain a single decision, we instead use the prompt-specific Hadamard product $\vs = |A_{L,y}| \odot |\vz_L| \in \sR^D$ for the prompt of interest, which preserves the learned linear coefficients but reweights them by that prompt's activation pattern.  Prompt averaging is appropriate when the target object of study is a population-level dependency, e.g., the mechanism by which the model generally distinguishes grammatical from ungrammatical sentences, while the prompt-specific score is appropriate when the target is per-instance mechanism.  All circuits shown in the main paper use prompt averaging; per-prompt analyses are illustrated in Appendix~\ref{app:sae_features}.

For each important feature, we label its semantic concept using two complementary procedures: a \emph{multi-prompt} approach that aggregates tokens which strongly activate the feature across many prompts, and a \emph{single-prompt} approach that perturbs targeted tokens in a single prompt and verifies that the activation changes accordingly.
We then trace each important feature backward through earlier layers via the learned adjacency $A_{i-1,i}$ and label its parents with the same procedures, yielding tree-shaped circuit paths across transformer blocks.
Full procedure, worked examples (e.g., feature $z_{12,20726}$ for ``-self,'' $z_{12,776}$ for ``thirst/hunger''), and per-feature tables are deferred to Tables \ref{tab:semantic_multi_20726}, \ref{tab:semantic_multi_776}, \ref{tab:semantic_multi_19322}, and \ref{tab:semantic_multi_3092} in the same section of Appendix.

Figure~\ref{fig:example_circuit_1} presents such a tree-shaped circuit, consisting of sparse features with human-interpretable meanings across 5 layers. An additional example is provided in Appendix~\ref{app:sparse_features_circuit}. From the circuits in Figure~\ref{fig:example_circuit_1}, we make the following observations:

\textbf{Persistence.} Certain semantic concepts persist along circuit paths across multiple layers, particularly in the later layers. For example, the concept of ``-self'' is present in the $20726^{\text{th}}$ feature of layer 12, the $6368^{\text{th}}$ feature of layer 11, the $2985^{\text{th}}$ feature of layer 10, the $9592^{\text{th}}$ feature of layer 9, and the $15186^{\text{th}}$ feature of layer 8. We highlight in black the circuit paths that capture persistence relations between consecutive layers.

\textbf{Merging and Dropping.} We also observe that sparse features in later layers can merge semantic concepts from multiple parent features in the preceding layer, or disregard (i.e., drop) certain concepts contributed by those parent features. For instance, the $10609^{\text{th}}$ feature of layer $9$ merges concepts from both the $15186^{\text{th}}$ and $27170^{\text{th}}$ features of layer $8$. In contrast, the $6368^{\text{th}}$ feature of layer 11 retains only the concept ``himself'' and disregards all other forms of ``-self'' from the $2985^{\text{th}}$ feature of layer $10$. We highlight circuit paths representing propagation in blue and decomposition in orange.

\begin{table*}[hpt]
\centering
\small
\caption{Runtime and numbers of selected features for \textsc{SHIFT} versus our \mn{} method. 
The runtime does not include manual interpretation time.}
    \label{tab:bib_pythia_gemma_eff}
\scalebox{1}{
\renewcommand{\arraystretch}{1.2}
\begin{tabular}{l|cc|cc|cc}
\toprule
\textbf{Method} 
& \multicolumn{2}{c|}{\textbf{Pythia-70M}} 
& \multicolumn{2}{c|}{\textbf{Gemma-2-2b}} 
& \multicolumn{2}{c}{\textbf{Gemma-2-9b}}\\
& \scriptsize{\# of features}
& \scriptsize{Runtime (s)} $\downarrow$
& \scriptsize{\# of features}
& \scriptsize{Runtime (s)} $\downarrow$
& \scriptsize{\# of features}
& \scriptsize{Runtime (s)} $\downarrow$
\\
\midrule
\textsc{shift} & \multirow{2}{*}{49} & 257.6 & \multirow{2}{*}{65} & 371.2 & \multirow{2}{*}{71} & 908.4 \\

\textsc{shift}-retrain & & 356.3 & & 476.8 & & 1056.0 \\
\midrule
\scriptsize{\mn{}} & \multirow{2}{*}{41} & 36.5 & \multirow{2}{*}{55} & 47.2 & \multirow{2}{*}{59} & 107.4 \\

\scriptsize{\mn{}}-retrained & & 61.9 (17.37\%\textcolor{red}{$\uparrow$}) & & 72.5 (15.20\%\textcolor{red}{$\uparrow$}) & & 125.2 (11.98\% \textcolor{red}{$\uparrow$}) \\
\bottomrule
\end{tabular}
}
\end{table*}

\begin{table*}[h!]
\centering
\caption{Prediction accuracy (\%) on the Bias-in-Bios dataset across three LLMs and debiasing methods.  For \emph{Prof.}~(profession-prediction accuracy on the balanced test set) and \emph{Worst}~(accuracy on the worst-performing demographic subgroup), higher is better.  For \emph{Gender}~(how predictable gender is from the predictor's representation), values close to 50\% are better, indicating that the predictor does not rely on the spurious gender signal; values much above 50\% indicate residual gender leakage, while values much below 50\% indicate over-correction.  \textsc{oracle} is trained on the balanced set and shown as a non-comparable upper bound; bold marks the best non-\textsc{oracle} entry per column.}
    \label{tab:bib_pythia_gemma}
\small
\scalebox{1}{
\begin{tabular}{lccccccccc}
\toprule
\textbf{Method}
& \multicolumn{3}{c}{\textbf{Pythia-70M}} 
& \multicolumn{3}{c}{\textbf{Gemma-2-2B}} & 
\multicolumn{3}{c}{\textbf{Gemma-2-9B}} \\
\cmidrule(lr){2-4} \cmidrule(lr){5-7} \cmidrule(lr){8-10}
& \scriptsize{Profession ($\uparrow$)}
& \scriptsize{Gender}
& \scriptsize{Worst group($\uparrow$)}
& \scriptsize{Profession($\uparrow$)}
& \scriptsize{Gender}
& \scriptsize{Worst group($\uparrow$)}
& \scriptsize{Profession($\uparrow$)}
& \scriptsize{Gender}
& \scriptsize{Worst group($\uparrow$)} 
\\
\midrule
\textsc{original} & 61.9 & 87.4 & 24.4 & 69.6 & 79.5 & 4.1 & 70.8 & 78.2 & 23.4\\
\textsc{cbp} & 83.3 & 60.1 & 67.7 & 90.1 & \textbf{50.2} & 86.8& 94.7 & \bf{50.0} & 91.5 \\
\textsc{oracle} & 93.0 & 49.4 & 91.4 & \textbf{95.1} & \bf{50.2} & 91.7 & 95.7 & \bf{50.0} & 90.5 \\
\textsc{Linear Probing}            & 84.0          & 58.0          & 70.0          & 90.5          & 51.5          & 87.5          & 95.0          & 50.5          & 91.0          \\
\textsc{shift}& 88.5 & 54.0 & 76.0 & 72.8 & 51.6 & 43.7 & 77.1 & 52.8 & 67.9  \\
\textsc{Shift}-retrain& 93.1 & 52.0 & \textbf{89.0} & 94.2 & 52.4 & 92.4 & 96.0 & 51.3 & 90.3 \\
\midrule
\scriptsize{\mn{}} & 90.5 & \bf{50.1} & 75.8 & 77.5 & 50.7 & 50.5 & 81.5 & 50.3 & 69.8 \\
\scriptsize{\mn{}}-retrain& \textbf{94.2} & 50.6 & 88.7  & \textbf{95.1} & 52.8 & \textbf{92.9} & \bf{96.9} & 50.5 & \bf{91.5} \\
\bottomrule
\end{tabular}
}
\end{table*}

\textbf{Cause-Effect and Spurious Correlations.} Our circuits can capture causal relations between features that encode cause--effect semantic concepts. For example, the $22314^{\text{th}}$ feature of layer $9$ represents the concept of ``hunger/thirst,'' which can be considered a cause of the action ``ate/eat,'' encoded in the $5574^{\text{th}}$ feature of layer $10$. However, our assumption that causal orientations align with the computation order results in some circuit paths appearing anti-causal. From a human perspective, one typically feels hungry before taking actions such as ``eat food/meals/gourmet,'' yet our circuit includes a path from the $1988^{\text{th}}$ feature of layer $8$ to the $22314^{\text{th}}$ feature of layer $9$, which implies the reverse. Moreover, our circuits also capture spurious correlations. For example, the $20726^{\text{th}}$ feature of layer $12$, which represents ``-self,'' is spuriously correlated with the ``hunger/thirst'' concept encoded in the $29041^{\text{th}}$ feature of layer $11$. Such correlations are likely introduced by biases in the training data, such as the frequent co-occurrence of these two semantic concepts within the same sentence. By analyzing these circuit paths, we can infer the nature of dataset biases and potentially mitigate them through targeted model editing. We next show how such insights from a learned circuit can be leveraged to improve downstream domain generalization in Section~\ref{sec:dg}.

\textbf{Faithfulness and Completeness.}
We evaluate the learned SAE-feature circuits on CoLA using the faithfulness and completeness metrics of \citet{markssparse}.  As Figure~\ref{fig:fc} shows, our circuits match the intervention-based \textsc{shift} baseline under standard node ablation, while requiring no per-edge interventions.  Because our sparse-regression framework learns explicit edge coefficients, we additionally introduce \emph{edge ablation}, a capability that \textsc{shift} does not support, and confirm that a small subset of essential edges (together with their connected SAE features) governs the model's prediction behavior.  Full methodology and per-quartile results are in Appendix~\ref{app:faithfulness}.
\begin{figure}[hpt]
    \centering
    \includegraphics[width=\columnwidth]{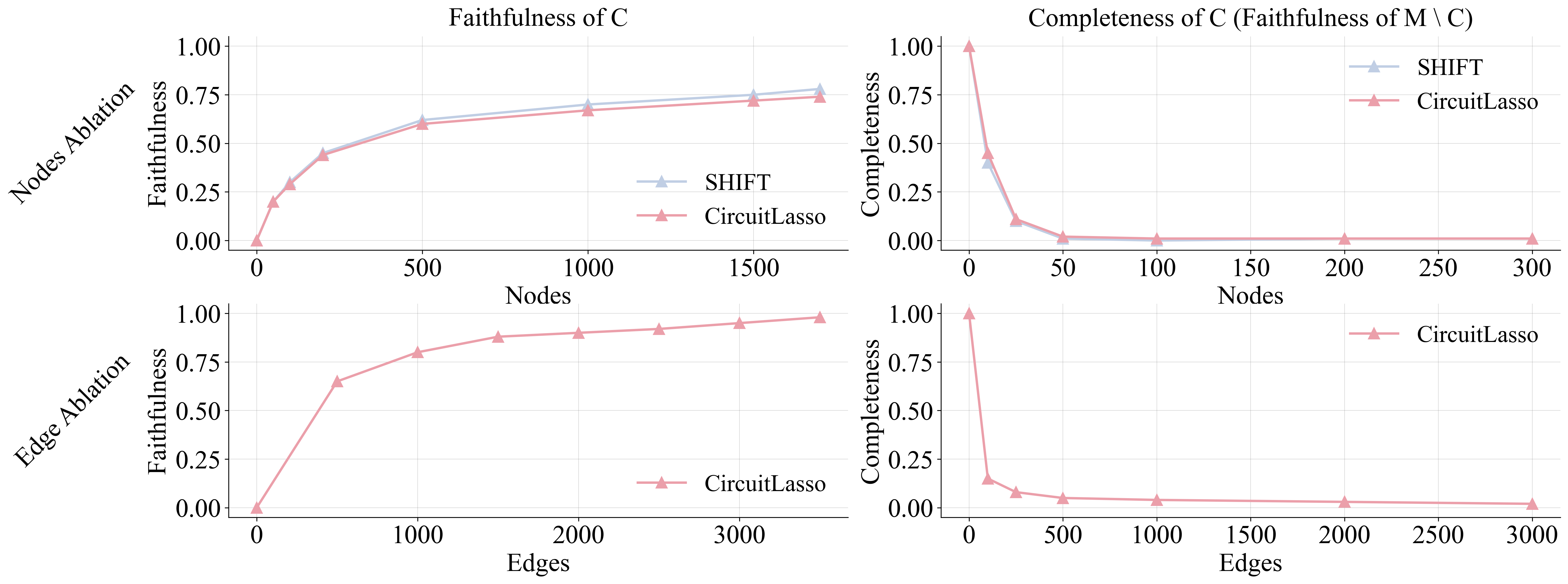}
    \caption{Faithfulness and completeness scores for the learned circuits on the CoLA training set, under node ablation (top) and edge ablation (bottom). Ideal faithfulness is 1, ideal completeness is 0.}
    \label{fig:fc}
\end{figure}

\subsubsection{Downstream Domain Generalization: A Utility Demonstration}\label{sec:dg}

We treat this section as a \emph{utility demonstration} of the insights from our learned circuits, not as a multi-layer circuit learning experiment.  Our central claim remains efficiency improvements and any accuracy advantage is interpreted as a side benefit of operating on disentangled SAE features, not as a contribution per se.

\textbf{Setup.}
We evaluate on the Bias-in-Bios dataset (BiB)~\citep{de2019bias} on Pythia-70M~\citep{biderman2023pythia}, Gemma-2-2B and Gemma-2-9B~\citep{gemma_2024}, using the pre-trained SAEs from~\citet{markssparse} for Pythia-70M and \citet{lieberum2024gemma} for Gemma-2-2B and Gemma-2-9B.  The task is to predict profession from a biography under a gender-spurious training set and evaluate on a balanced test set in which profession and gender are independent.  We compare against the following baselines: \textsc{original}, \textsc{oracle}, \textsc{cbp}~\citep{yan2023robust}, \textsc{shift} and \textsc{shift}-retrain~\citep{markssparse}, and \textsc{linear probing}~\citep{gurnee2023finding}.

\textbf{Method.}
We rank SAE features at a single layer by $|\widehat{A}_{i,y}|$, manually identify gender-correlated features, zero them, and feed the result either directly into the trained classifier (\mn{}) or into a freshly retrained classifier (\mn{}-retrain).  Full setup, gender-leakage, and worst-group columns are in Appendix~\ref{app:bib_full}.

\textbf{Results.}
Table~\ref{tab:bib_pythia_gemma} reports profession accuracy across the three LLMs.  \mn{} and \mn{}-retrain achieve performance comparable to, and in some cases slightly better than, the strongest non-\textsc{oracle} baseline; we attribute the small accuracy advantage to finer-grained manipulation of disentangled SAE features rather than to a circuit learning effect.  The central efficiency claim is in Table~\ref{tab:bib_pythia_gemma_eff}: \mn{} matches the strongest baseline at substantially lower cost, with the gap widening as model size grows.

\section{Conclusion and Discussion}
We present \mn{}, a sparse-regression surrogate for circuit discovery in LLMs that replaces the costly intervention-based pipeline of prior work with a Lasso-based, observational alternative.  On the \textsc{InterpBench} benchmark, \mn{} achieves \emph{efficiency at parity of accuracy}: it matches the structural-recovery quality of EAP and EAP-ig at a fraction of the runtime, and scales cleanly to the high-dimensional SAE feature spaces where intervention-based methods become prohibitive.  Operating on monosemantic SAE features further yields tree-shaped circuit paths whose nodes carry human-interpretable semantic labels, and a utility demonstration on Bias-in-Bios shows that insights from the learned circuits support domain generalization at substantially lower cost than the strongest non-\textsc{oracle} baseline.

\textbf{Future work.} Three directions stand out.  \emph{(i) Beyond a first-order surrogate.}  Our linear formulation recovers the dependency skeleton accurately, but edge weights are not exact causal effects of the underlying nonlinear computation; characterizing regimes in which the linear coefficients become quantitatively faithful is an open question.  \emph{(ii) Architectures with within-layer feedback.}  The block-triangular acyclicity we use is supplied by the transformer's feed-forward order; extending \mn{} to architectures with within-layer feedback would broaden its applicability.  \emph{(iii) Beyond approximate causal sufficiency.}  The residual stream and SAE reconstruction error mean causal sufficiency holds only approximately; combining \mn{} with richer residual-stream models is a natural next step.


\section*{Acknowledgements}
N.Y.\ and Y.Y.\ acknowledge support by the National Institutes of Health under award 1R01GM157589-01 and the AFOSR Grant No.\ FA9550-22-1-0197.  N.Y.\ is also supported by IBM through the IBM--Rensselaer Future of Computing Research Collaboration.

\bibliography{Reference}
\bibliographystyle{icml2026}


\newpage
\onecolumn
\appendix
\section{Theoretical Proofs}

\subsection{Proof of Proposition~\ref{our_bigo_model_neuron}}\label{proof_bigo_neuron}
\begin{proof}
The block upper-triangular structure of $A$ decouples Eq.~(\ref{eq:neuron_obj}) into $L$ independent multi-output Lasso subproblems, one per output neural activations.  Specifically, the $\ell$-th subproblem predicts the $d$-dimensional $\ell$-th neuron activations from the $p_\ell = (\ell-1)\,d$ coordinates of all preceding (earlier-indexed) layer blocks; in particular $p_1 = 0$, since the first layer has no preceding blocks.

\emph{Per-iteration cost.}  Let $\vh_\ell \in \sR^{d \times M}$ denote the $\ell$-th neuron activation block and $\vh_{<\ell} \in \sR^{p_\ell \times M}$ denote the stacked activations of all preceding blocks.  Each FISTA iteration on the $\ell$-th subproblem evaluates the gradient of the quadratic data-fit term, dominated by the matrix product $\vh_{<\ell}\,\vh_{<\ell}^{\top}\,A_\ell$ with $A_\ell \in \sR^{p_\ell \times d}$, at cost $\mathcal{O}(p_\ell\,d\,M)$.  The subsequent soft-thresholding (proximal) step is $\mathcal{O}(p_\ell\,d)$ and is dominated.  Summing across the $L$ subproblems,
\begin{equation*}
    \sum_{\ell=1}^{L} p_\ell\,d \;=\; d^{2} \sum_{\ell=1}^{L} (\ell - 1) \;=\; \frac{d^{2}\,L(L-1)}{2},
\end{equation*}
so a single outer iteration across all $L$ subproblems costs $\mathcal{O}\!\left(\dfrac{M\,L(L-1)\,d^{2}}{2}\right)$.

\emph{Iteration complexity.}  The objective of each subproblem is the sum of a smooth quadratic with Lipschitz-continuous gradient (with constant $L_f^{(\ell)} = \lambda_{\max}\!\left(\vh_{<\ell}\,\vh_{<\ell}^{\top}\right)$) and a non-smooth $\ell_1$ regularizer.  FISTA on this composite objective satisfies the accelerated rate~\citep{beck2009fast}
\begin{equation*}
    F(A_\ell^{(k)}) - F(\widehat{A}_\ell) \;\le\; \frac{2\, L_f^{(\ell)} \,\|A_\ell^{(0)} - \widehat{A}_\ell\|_F^{2}}{(k+1)^{2}},
\end{equation*}
matching the optimal rate of Nesterov's accelerated gradient method on smooth convex problems.  Reaching $\epsilon$-suboptimality therefore requires only $\mathcal{O}(1/\sqrt{\epsilon})$ outer iterations.

Combining the per-iteration and iteration-complexity bounds gives the total cost $\mathcal{O}\!\left(\dfrac{M\,L(L-1)\,d^{2}}{2\,\sqrt{\epsilon}}\right).$
\end{proof}

\subsection{Proof for Proposition~\ref{efficient_conditions}}\label{proof_efficient_conditions}
\begin{proof}
Let $d$ be the model hidden dimension, $n_{\text{token}}$ the sequence length, $h$ the number of attention heads (so the head dimension is $k = d/h$), and $f$ the feedforward expansion factor.  Per transformer block, the dominant FLOP costs come from the Q/K/V projections ($\approx 6\,n_{\text{token}}\,d^2$), the $QK^\top$ products across all heads ($\approx 2\,n_{\text{token}}^2\,d$), the attention-weighted $V$ across heads ($\approx 2\,n_{\text{token}}^2\,d$), the output projection ($\approx 2\,n_{\text{token}}\,d^2$), and the two-layer feed-forward network of widths $d \to fd \to d$ ($\approx 4f\,n_{\text{token}}\,d^2$).  Summing, one forward pass per block costs $\mathcal{O}\!\left(n_{\text{token}}\,d^2(8+4f) + 4\,n_{\text{token}}^2\,d\right)$.  A backward pass costs approximately twice the corresponding forward pass, $\mathcal{O}\!\left(n_{\text{token}}\,d^2(16+8f) + 8\,n_{\text{token}}^2\,d\right)$.

\emph{Shared activation-collection cost.}  Both \mn{} and EAP-ig must perform one forward pass per observation in order to obtain the per-location activations on which their respective methods operate.  For $M$ observations across $S$ blocks, this shared cost is $\mathcal{O}\!\left(4MS\,n_{\text{token}}\,d^2(2+f) + 4MS\,n_{\text{token}}^2\,d\right)$, and we subtract it from both sides of the comparison below.

\emph{EAP-ig additional cost.}  Beyond activation collection, EAP-ig requires one further forward pass and one backward pass per observation in order to compute the attribution-patching gradients.  Across $S$ blocks and $M$ observations this is $\mathcal{O}\!\left(12MS\,n_{\text{token}}\,d^2(2+f) + 12MS\,n_{\text{token}}^2\,d\right)$.  The subsequent linear attribution-patching step over $L$ component locations contributes an additional $\mathcal{O}(MLd)$, which is dominated by the forward/backward terms.  EAP-ig's total additional cost is therefore
\[
\scalebox{0.9}{$\mathcal{O}\!\left(12MS\,n_{\text{token}}\,d^2(2+f) + 12MS\,n_{\text{token}}^2\,d + MLd\right).$}
\]

\emph{\mn{} additional cost.}  Beyond activation collection, \mn{} runs FISTA on the block upper-triangular Lasso problem of Eq.~(\ref{eq:neuron_obj}), with total cost $\mathcal{O}\!\left(\dfrac{ML(L-1)\,d^2}{2\sqrt{\epsilon}}\right)$ by Proposition~\ref{our_bigo_model_neuron}.

\emph{Efficiency conditions.}  For \mn{} to have strictly lower additional cost than EAP-ig, the dominant EAP-ig term must exceed the FISTA cost.  We consider the two regimes.
\begin{itemize}
    \item \emph{Case 1: $d \gg n_{\text{token}}$.}  The dominant EAP-ig term is $12MS\,n_{\text{token}}\,d^2(2+f)$.  Requiring this to exceed the FISTA cost gives
    \begin{align*}
        \begin{split}
            12MS\,n_{\text{token}}\,d^2(2+f) \;>&\; \frac{ML(L-1)\,d^2}{2\sqrt{\epsilon}} \\
            \Longrightarrow\;
    n_{\text{token}} \;>&\; \frac{L(L-1)}{24\,S(2+f)\sqrt{\epsilon}}.\\
        \end{split}
    \end{align*}
    \item \emph{Case 2: $n_{\text{token}} \gg d$.}  The dominant EAP-ig term is $12MS\,n_{\text{token}}^2\,d$.  Requiring this to exceed the FISTA cost gives
    \begin{align*}
        \begin{split}
            12MS\,n_{\text{token}}^2\,d \;>&\; \frac{ML(L-1)\,d^2}{2\sqrt{\epsilon}} \\
            \Longrightarrow\;
    n_{\text{token}}^2 \;>& \; \frac{L(L-1)\,d}{24\,S\,\sqrt{\epsilon}},\\
    \Longrightarrow\;n_{\text{token}}\;>&\;\sqrt{\frac{L\,(L-1)\,d}{24\,S\,\sqrt{\epsilon}}}.
        \end{split}
    \end{align*}
\end{itemize}

Combining the two regimes gives the stated conditions.
\end{proof}

\section{Discussion of Assumptions}\label{app:assumptions}
Treating Eq.~(\ref{eq:linearSEM}) as a strict linear structural equation model (SEM) would, following the continuous causal discovery literature~\citep{peters2014causal,zheng2018dags,park2020identifiability}, require causal sufficiency, a noise model that is either non-Gaussian or Gaussian with equal variance, and a correctly specified linear functional form. All three of these are at best approximations in transformer LLMs, and we explicitly do not appeal to identifiability theorems that depend on them. Instead, we use Eq.~(\ref{eq:linearSEM}) as a sparse regression surrogate whose purpose is to recover the dependency skeleton over the modeled components, which features influence which others, at a fraction of the cost of intervention-based circuit discovery. We outline below how each assumption is violated in our setting and why the skeleton-level conclusions of our method are nonetheless meaningful.

\paragraph{On Causal Sufficiency.} The residual stream of a transformer carries every preceding layer's contribution forward in parallel with the attention and MLP outputs we explicitly model, and the SAE reconstruction error introduces additional latent variation that is not captured by the recovered sparse features. Causal sufficiency therefore holds only approximately: activations at one location are not generated solely from the parents we include in $A$. We do not claim to recover a complete causal graph of the LLM. Rather, the dependency skeleton estimated by Eq.~(\ref{eq:circuit_prob}) is a sparse summary of the linear-projection structure among the modeled components, with unmodeled contributions absorbed into the residual term. The $\ell_1$ penalty filters out weak dependencies regardless of whether they originate from omitted parents or from genuine but small effects, so the recovered skeleton remains a useful and conservative map of the components included in the model.

\paragraph{On the Noise Model.} LLM activations are deterministic functions of the input prompt: the same prompt, evaluated twice, produces identical activations. The residual $\varepsilon$ in Eq.~(\ref{eq:linearSEM}) is therefore best understood as \emph{linearization error}, the deviation between the LLM's true nonlinear computation at each modeled component and the best linear projection of that component onto its predecessors over the observed prompts, rather than exogenous stochastic noise. In particular, we do not assume that $\varepsilon$ is independent of the parents, and we do not invoke noise distribution conditions (non-Gaussianity, equal variance) that the deterministic generative process would not satisfy in any case. The linear regression in Eq.~(\ref{eq:circuit_prob}) is justified as a \emph{projection} of the activation matrix onto the sparse linear hypothesis class, not as inference of an underlying stochastic SEM.

\paragraph{Linearity vs. Nonlinearity.} The transformer's true computational graph is globally nonlinear: attention, LayerNorm, and the activation functions inside MLP and SAE encoders all but guarantee that exact functional relationships between layer activations are not linear. We adopt the linear formulation as a deliberate first-order surrogate. Two observations justify its use for skeleton recovery. First, where a true nonzero dependency exists in the nonlinear computation, the corresponding linear regression coefficient is generically nonzero, Lasso under moderate model misspecification preserves support recovery up to constants, and edge \emph{presence} (not exact edge weight) is what we use downstream. Second, we introduce the nonlinear (kernel/MLP) extension of \mn{}: The linear formulation can be naturally extended to model nonlinear dependencies by representing the relationships between a variable and its parents through kernel functions or multi-layer perceptrons. The recovered topological skeleton is essentially the same as in the linear case, at substantially higher computational cost. We therefore treat the linear surrogate as the operational definition of the method. One can always employ the nonlinear formulation when finer-grained edge attribution is needed. Edge weights in $\widehat{A}$ should not be interpreted as exact causal effects of the underlying nonlinear computation; their role is to rank dependencies for selection, not to quantify them.

\paragraph{On Acyclicity.} \citet{reisach2021beware} showed that continuous formulation of causal discovery methods, such as NOTEARS~\citep{zheng2018dags}, can succeed on synthetic benchmarks for the wrong reason, by exploiting marginal variances to recover the topological ordering, and that simply sorting variables by variance often matches their structural recovery. This critique does \emph{not} apply to our setting because the topological ordering is supplied externally, by the LLM's feed-forward computation order, rather than learned from the data. Concretely, our acyclicity constraint is implemented as a fixed block triangular mask on $A$ that respects the architectural layer order and the within-layer attention-then-MLP order; the optimizer never has access to a degree of freedom that could be co-opted by variance-based reordering. The Lasso step in Eq.~(\ref{eq:neuron_obj}) only selects which entries within the legal triangular pattern are nonzero. We therefore inherit none of the NOTEARS pathologies that the Reisach critique targets, and the acyclicity of the recovered circuit is a property of the architecture, not of an identifiability argument.  

\section{Experiment Details}\label{app:details}

\subsection{\mn{} Optimization with FISTA}\label{app:fista}
All instances of the \mn{} optimization in Eqs.~(\ref{eq:neuron_obj}), (\ref{eq:layer2layer_obj}), and (\ref{eq:layer2y_obj}) decompose into multi-output Lasso subproblems of the form $\min_A \tfrac{1}{2}\|\mX - A^\top \mX\|_F^2 + \lambda\|A\|_1$. We solve each subproblem with FISTA~\citep{beck2009fast} with backtracking line search, warm-started across both the neuron activations dimension and the regularization path. Predictor columns are standardized to unit $\ell_2$ norm before solving, and coefficients are rescaled afterward. We terminate when the relative objective decrease falls below $10^{-6}$. Our solver is implemented in PyTorch on a single NVIDIA A100 GPU. 

\subsection{\mn{} on Neurons} \label{app:neurons}
We begin by collecting activations from the model neurons. To ensure fairness, we use the same set of input prompts as the baselines. For example, in each case, EAP and EAP-ig employ two sets of data inputs: a clean run with $M$ input prompts, each with $n_{\text{token}}$ tokens, i.e., an $M \times n_{\text{token}}$ array of tokens $\mT_{\text{clean}}$, and a corrupted run of the same dimensionality, an $M \times n_{\text{token}}$ array $\mT_{\text{corrupted}}$. Our approach combines these two runs into a single dataset, an array $\mT = (\mT_{\text{clean}}, \mT_{\text{corrupted}})$ of $2M \times n_{\text{token}}$ tokens, which is then used to generate neuron activations at various locations in the LLM. Given $\mT$ and a pre-trained model, we obtain neuron activations at a location $i$ with shape $d\times 2M \times n_{\text{token}}$ and average over tokens to produce activations $\mH_i \in \sR^{d\times 2M}$. Repeating this process across all $L$ locations and sorting them according to the computation order yields $\tilde{\mH} \in \sR^{Ld \times 2M}$. Substituting $\tilde{\mH}$, the collected data matrix with $N = Ld$ dimensions and $2M$ observations into Eq.~(\ref{eq:neuron_obj}),
we aim to learn a weighted adjacency matrix $A \in \sR^{N \times N}$ that encodes the causal relations between neuron locations. Finally, we infer the causal circuit.

\subsection{\mn{} on Sparse Autoencoder Features}\label{app:sae_features} 
We extract GPT-2 small's neuron activations on the $M$ training sentences.
We select the final outputs from each transformer block (layer) as our locations of interest. Given a prompt with $n_{\text{token}}$ tokens, we obtain transformer outputs at the $i^{\text{th}}$ layer with shape $d\times n_{\text{token}}$ and the corresponding sparse autoencoder features with shape $D\times n_{\text{token}}$. We then collect sparse autoencoder features for all $M$ prompts and average across tokens to produce sparse feature activations $\mZ_i \in \sR^{D \times M}$. Repeating this across all $L$ layers yields our dataset $\{\mZ_1, \mZ_2, \dots, \mZ_L\}$. In our setting, $L = 12$, $d = 768$, $D = 32{,}768$, and $M = 8{,}551$. For the CoLA task, the prediction target $y$ indicates whether a sentence is linguistically acceptable. Substituting $\{\mZ_i\}_{i=1}^L$ and $y$ into Eq.~(\ref{eq:layer2layer_obj}) and Eq.~(\ref{eq:layer2y_obj}), we learn weighted adjacency matrices between consecutive layers, $\{A_{i,i+1}\}_{i=1}^{L-1}$, and the weighted adjacency matrix between the final layer sparse autoencoder features $\vz_L$ and the prediction target $y$, i.e., $A_{L,y}$.

We expand on the procedure summarised in Section~\ref{sec:interp}.

\textbf{Identifying essential features.}
Starting with the final-layer adjacency $A_{L,y}$, we select features in $\vz_L \in \sR^D$ that are important for predicting $y$. The measure of importance can be either the absolute coefficients $|A_{L,y}|$, or, when prompt-specific analysis is desired, the Hadamard product $\vs = |A_{L,y}| \odot |\vz_L| \in \sR^D$ between the absolute coefficients and the absolute activations of $\vz_L$ for the chosen prompt.

\textbf{Multi-prompt approach.}
Given an important feature $z_{L,k}$, we collect multiple prompts together with the tokens that strongly activate the feature, then infer the encoded semantic concept by inspecting the collected tokens. For example, words ending in ``-self'' consistently activate $z_{12,20726}$, suggesting that this feature captures the presence of such words. Tables~\ref{tab:semantic_multi_20726}, \ref{tab:semantic_multi_776}, and \ref{tab:semantic_multi_19322} provide additional examples.

\begin{table*}[hpt]
    \centering
    \setlength{\tabcolsep}{3pt}
    \begin{tabular}{lcl}
    \toprule
    Prompts & Values of $z_{12,20726}$ & Per-token values \\
    \midrule
    Kiss \yhlt{himself}. & 0.3051 & (0, 0, 0, \yhlt{1.5254}, 0) \\
    This movie just watches \yhlt{itself}. & 0.1861 & (0, 0, 0, 0, 0, \yhlt{1.3027}, 0) \\
    This window just opens \yhlt{itself}. & 0.1759 & (0, 0, 0, 0, 0, \yhlt{1.2311}, 0) \\
    This list includes my name on \yhlt{itself}. & 0.1697 & (0, 0, 0, 0, 0, 0, 0, \yhlt{1.5277}, 0) \\
    This silver polishes \yhlt{itself}. & 0.1692 & (0, 0, 0, 0, 0, \yhlt{1.1844}, 0) \\
    He said that \yhlt{himself} was hungry. & 0.1689 & (0, 0, 0, 0, \yhlt{1.3509}, 0, 0, 0) \\
    Every picture of \yhlt{itself} arrived. & 0.1665 & (0, 0, 0, 0, \yhlt{1.1652}, 0, 0) \\
    Bill understands Mary and \yhlt{himself}. & 0.1638 & (0, 0, 0, 0, 0, \yhlt{1.1467}, 0) \\
    \yhlt{Myself} saw me. & 0.1602 & (0, 0, \yhlt{0.8009}, 0, 0) \\
    \bottomrule
    \end{tabular}
    \caption{\textit{Multi-prompt} approach for identifying semantic concepts for sparse feature $z_{12,20726}$.}
    \label{tab:semantic_multi_20726}
\end{table*}

\begin{table*}[hpt]
    \centering
    \setlength{\tabcolsep}{3pt}
    \begin{tabular}{ll}
    \toprule
    Prompts & Per-token values of $z_{12,776}$ \\
    \midrule
    \yhlt{Hun}\ghlt{ger} fainted Sharon. & (0, \yhlt{0.9980}, \ghlt{3.3986}, 0, 0, 0, 0) \\
    Many people were dying of \yhlt{thirst}. & (0, 0, 0, 0, 0, 0, \yhlt{2.0910}, 0) \\
    One people was dying of \yhlt{thirst}. & (0, 0, 0, 0, 0, 0, \yhlt{1.8140}, 0) \\
    John whined that he was \yhlt{hungry}. & (0, 0, 0, 0, 0, 0, 0, 0, \yhlt{1.9004}, 0) \\
    Many soldiers have claimed bottled water satisfies \yhlt{thirst} best. & (0, 0, 0, 0, 0, 0, 0, 0, \yhlt{1.9243}, 0, 0) \\
    \bottomrule
    \end{tabular}
    \caption{\textit{Multi-prompt} approach for identifying semantic concepts for sparse feature $z_{12,776}$.}
    \label{tab:semantic_multi_776}
\end{table*}

\begin{table*}[hpt]
    \centering
    \setlength{\tabcolsep}{3pt}
    \begin{tabular}{ll}
    \toprule
    Prompts & Per-token values of $z_{12,19322}$ \\
    \midrule
    The teacher became \yhlt{tired} \ghlt{of} the students. & (0, 0, 0, 0, \yhlt{2.7252}, \ghlt{1.0383}, 0, 0, 0) \\
    The president looked \yhlt{weary}. & (0, 0, 0, 0, \yhlt{2.1888}, 0) \\
    Genie intoned that she was \yhlt{tired}. & (0, 0, 0, 0, 0, 0, 0, 0, \yhlt{2.6658}) \\
    John placed him \yhlt{busy}. & (0, 0, 0, 0, \yhlt{1.5510}, 0) \\
    Visiting relatives can be \yhlt{boring}. & (0, 0, 0, 0, 0, 0, \yhlt{1.8287}, 0) \\
    \bottomrule
    \end{tabular}
    \caption{\textit{Multi-prompt} approach for identifying semantic concepts for sparse feature $z_{12,19322}$.}
    \label{tab:semantic_multi_19322}
\end{table*}

\textbf{Single-prompt approach.}
To validate the plausibility of the inferred semantic concept, we choose a single prompt, systematically perturb one or more tokens consistent with that concept, and observe the change in $z_{L,k}$. If the perturbation deactivates the feature, the concept is considered reasonable. For example, in the prompt ``He said that himself was hungry,'' the word ``himself'' (the fifth token) activates $z_{12,20726}$ to $1.3509$; replacing it with ``him,'' ``he,'' or other alternatives that lack the ``-self'' suffix drops the activation to $0$. Table~\ref{tab:semantic_multi_3092} provides an additional example.

\begin{table*}[hpt]
    \centering
    \setlength{\tabcolsep}{3pt}
    \begin{tabular}{ll}
    \toprule
    Prompts & Per-token values of $z_{12,3092}$ \\
    \midrule
    He said that himself was hungry\yhlt{.} & (0, 0, 0, 0, 0, 0, 0, \yhlt{6.0688}) \\
    He said that himself was hungry\yhlt{yet} & (0, 0, 0, 0, 0, 0, 0, \yhlt{0}) \\
    He said that himself was hungry\yhlt{,} & (0, 0, 0, 0, 0, 0, 0, \yhlt{0}) \\
    He said that himself was hungry\yhlt{?} & (0, 0, 0, 0, 0, 0, 0, \yhlt{3.4722}) \\
    He said that himself was hungry\yhlt{!} & (0, 0, 0, 0, 0, 0, 0, \yhlt{3.7391}) \\
    \bottomrule
    \end{tabular}
    \caption{\textit{Single-prompt} approach for identifying semantic concepts for sparse feature $z_{12,3092}$.}
    \label{tab:semantic_multi_3092}
\end{table*}

\textbf{Backward tracing across layers.}
Applying the above to the final layer yields semantic labels for the most important features ($z_{12,20726}$ for ``-self,'' $z_{12,3092}$ for ``ending punctuation,'' $z_{12,776}$ for ``thirst/hunger,'' $z_{12,19322}$ for ``tired/weary,'' etc.). For each important final-layer feature $z_{L,k}$, we trace its most influential parents in layer $L{-}1$ using the learned adjacency $A_{L-1,L}$; as in the final layer, we may instead use the prompt-specific importance $\vs = |\partial z_{L,k}/\partial \vz_{L-1}| \odot |\vz_{L-1}| \in \sR^D$. The top parents (e.g.\ $z_{11,6368}$, $z_{11,29778}$, $z_{11,29041}$, $z_{11,21518}$) are then labeled by the same multi- and single-prompt procedures. Repeating this across consecutive layer pairs yields tree-shaped circuit paths spanning the transformer blocks.

\subsection{\mn{} for Domain Generalization}\label{app:dg} 
We select SAE features from a specific location of a pre-trained LLM, such as the transformer output at layer $22$ in Gemma-2-2B.
For each prompt, we average the $D$-dimensional SAE features at this location over tokens and collect them across all $M$ prompts in the training data, producing $\mZ_s \in \sR^{D \times M}$. Substituting $\mZ_s$ and the target observations $y = (y_1, y_2, \cdots, y_M) \in \sR^{M}$ into Eq.~(\ref{eq:layer2y_obj}), we estimate the weighted adjacency matrix $A_{s,y} \in \sR^{D}$ and identify important features with large absolute coefficients. This process is
equivalent to training a sparse linear classifier on the SAE features, which we later exploit for profession prediction. From $A_{s,y}$, we further interpret the semantics of the important features using our proposed \textit{multi-prompt} and \textit{single-prompt} approaches (Section~\ref{sec:interp}) and manually identify spurious features associated with gender. Unlike the \textsc{shift} method of \citet{markssparse}, which \emph{decodes} SAE features into
neuron activations after ablation, we ablate spurious features by setting their values to zero and \emph{directly feed} the resulting SAE feature values into our trained linear classifier. In addition, similar to \textsc{shift}, we also investigate retraining the linear classifier on the ablated SAE features.

\section{Interpretability on Sparse Features Circuits}

\subsection{Preliminary of Sparse Autoencoders}\label{app:sae}
\citet{cunningham2023sparse} uses a ReLU activation with $L_1$ sparsity regularization. Subsequent work explores alternative activation functions $\sigma(\cdot)$ to extract desired SAE features. \citet{rajamanoharan2024jumping} introduces a threshold to determine the minimum pre-activation for feature activation, while \citet{gao2024scaling} and \citet{bussmann2024batchtopk} enforce sparsity by selecting the top $K$ features. \citet{dunefsky2024transcoders} proposes Transcoders, which are similar to SAEs, but focusing on training interpretable approximations of MLPs.
In this work, we employ the following pre-trained LLMs and SAEs:
\begin{itemize}
    \item The open-source GPT-2 small SAEs~\citep{gao2024scaling} for all sublayers of the open-weights GPT-2 small model~\citep{radford2019language}. These SAEs use a ReLU-linear encoder with $D=32768$ and $L_1$ sparsity regularization.
    \item The open-source Pythia-70M SAEs~\citep{markssparse} for all sublayers of the open-weight Pythia-70M~\citep{biderman2023pythia}. These SAEs use a ReLU-linear encoder with $D=64 \times d$ and $L_1$ sparsity regularization.
    \item The open-source Gemma Scope SAEs~\citep{lieberum2024gemma} for all sublayers of the open-weights Gemma-2-2B, Gemma-2-9B models~\citep{gemma_2024}. These SAEs use the JumpReLU linear encoder and set $D = 8\times d$.
\end{itemize}

\subsection{Ablation Study of Sparsity Constraint}
Table~\ref{tab:ablation_lambda} presents an ablation study on the sparsity coefficient $\lambda$ for circuit discovery between the last layer sparse features and prediction target. Without regularization ($\lambda = 0$), the model achieves the highest training accuracy ($99.36\%$) but suffers severe overfitting, as reflected in a large performance drop on the test set ($71.10\%$). Introducing a small sparsity constraint ($\lambda = 10^{-5}$) improves test accuracy to $72.77\%$, the best among all settings, indicating enhanced generalization. Larger values of $\lambda$ further enforce sparsity but lead to higher training loss and a notable decline in both training and test accuracy, suggesting that excessive sparsity harms the model's capacity to capture meaningful circuit structure. We therefore select $\lambda = 10^{-5}$ as the optimal setting, as it achieves the best test accuracy while preserving a sparse, interpretable circuit structure.

\begin{table}[hpt]
    \centering
    \scriptsize
    \setlength{\tabcolsep}{3pt}
    \begin{tabular}{cccccc}
    \toprule
    \multirow{2}{*}{$\lambda$} & \multicolumn{3}{c}{Training Set} & \multicolumn{2}{c}{Test Set} \\
    & Pred. Loss & L1 Loss & Pred. Acc. (\%) & Pred. Loss & Pred. Acc. (\%)\\
    \midrule
    0 & 0.0612 & - & \bf{99.36} & 0.9977 & 71.10 \\
    $10^{-5}$ & 0.1741 & 9257.6 & 95.70 & 0.6684 & \bf{72.77} \\
    $5\times 10^{-5}$ & 0.3498 & 1898.7 & 85.44 & 0.5566 & 72.48\\
    $10^{-4}$ & 0.6143 & 0.5 & 70.44 & 0.6283 & 69.14 \\
    $5\times 10^{-4}$ & 0.5535 & 45.3 & 72.14 & 0.5566 & 70.28 \\
    \bottomrule
    \end{tabular}
    \caption{Ablation study of sparsity constraint coefficient $\lambda$ for circuit discovery between last layer $\vz_L$ and prediction target $y$.}
    \label{tab:ablation_lambda}
\end{table}

\subsection{Statistics of Learned Circuit Weighted Adjacency Matrices}
Figure~\ref{fig:adjacency_distribution} examines the coefficient distribution $|A_{L,y}|$ under the best setting ($\lambda = 10^{-5}$). We observe that most coefficient values are extremely small, with $82.2\%$ below 0.01, $83.3\%$ below 0.10, $86\%$ below 0.41, suggesting that only a small subset of features contribute substantially to the prediction. The top 5 essential features clearly dominate the distribution, highlighting the effectiveness of the sparsity constraint in filtering out irrelevant features and isolating semantically interpretable ones.
\begin{figure}[hpt]
    \centering
    \includegraphics[width=\columnwidth]{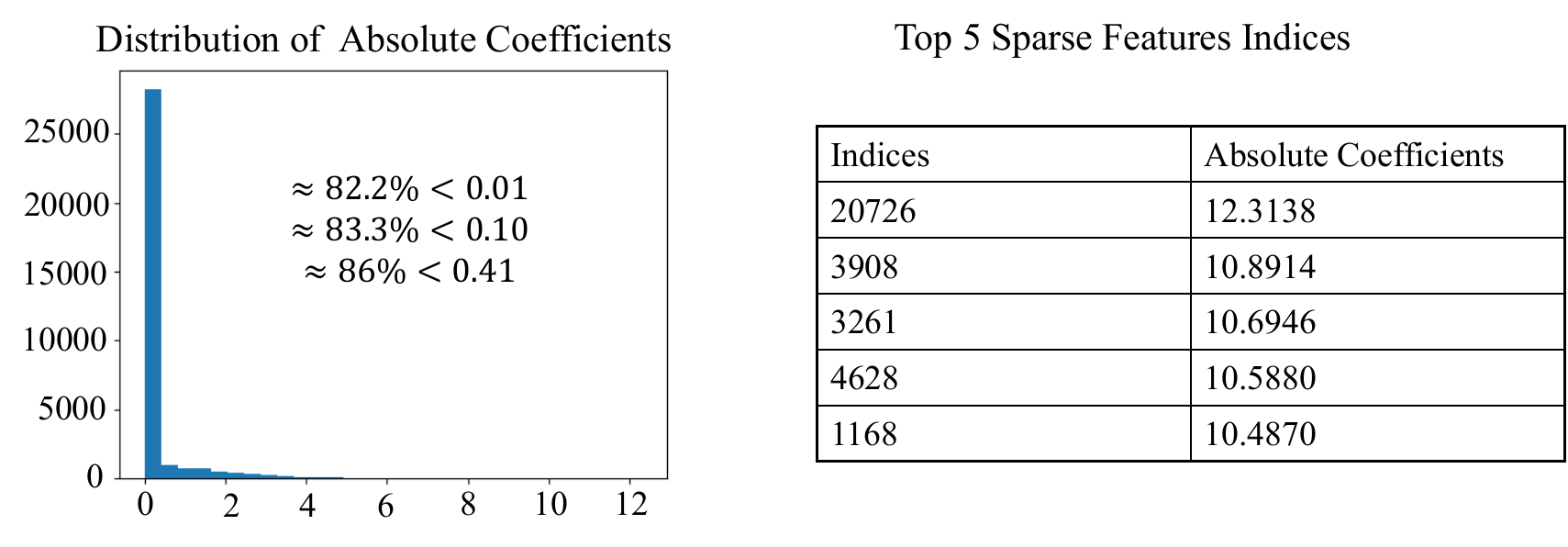}
    \caption{The distribution of $|A_{L,y}|$ and the selected top 5 essential features.}
    \label{fig:adjacency_distribution}
\end{figure}


\subsection{More Examples of Sparse Feature Circuits}\label{app:sparse_features_circuit}
\begin{figure*}[hpt]
    \centering
    \includegraphics[width=1\textwidth]{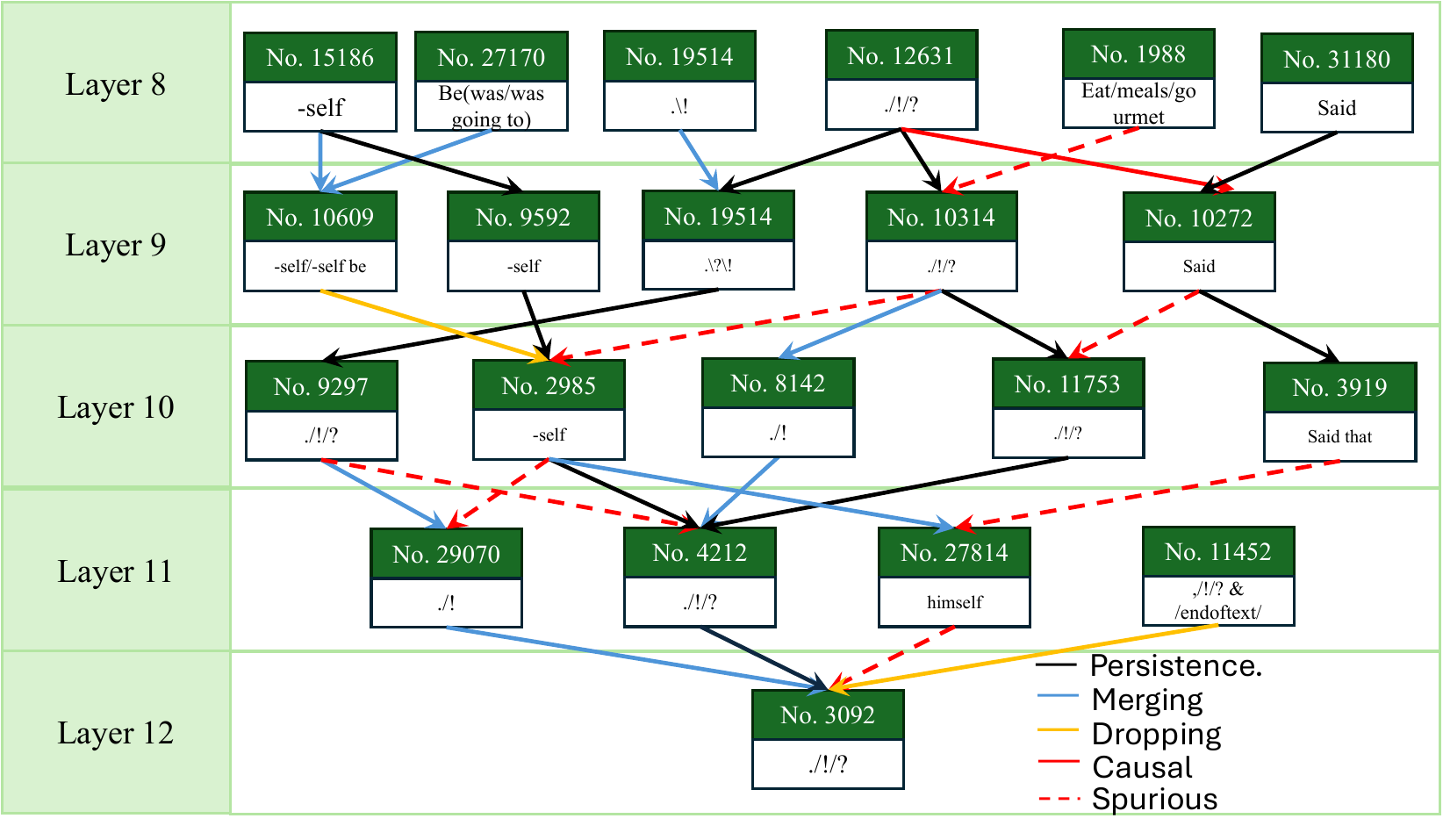}
    \caption{The learned circuit over SAE features on GPT-2 small model. Starting with feature No.\ 3092 in layer 12.}
    \label{fig:circuit_path_2}
\end{figure*}

\subsection{Faithfulness and Completeness}\label{app:faithfulness}

We expand on the summary in Section~\ref{sec:interp}.

\textbf{Setup.}
To more comprehensively evaluate the quality of our learned SAE-feature circuits on CoLA, we use the \emph{faithfulness} and \emph{completeness} metrics, following the standard protocol of \citet{markssparse}.  Let the learned circuit be $C$, and define the model output as $m = p(Y=\text{grammatically correct}) - p(Y=\text{grammatically wrong})$.

\textbf{Node ablation (standard, comparable to \textsc{shift}).}
We first apply the standard feature-ablation protocol of \citet{markssparse} and compare against the intervention-based circuit learning method \textsc{shift}.  To ensure fairness, we exclude SAE reconstruction errors and attention/MLP SAEs from the \textsc{shift} setting.  For \mn{}, we focus only on SAE features within the learned circuit and still ablate features in the original LLM.  The top two plots of Figure~\ref{fig:fc} show the node-ablation results: our learned circuit achieves performance comparable to \textsc{shift}, consistent with the finding of \citet{markssparse} that relatively small feature circuits can explain a substantial portion of a model's behavior.

\textbf{Edge ablation (novel, not supported by \textsc{shift}).}
Because our sparse-regression framework explicitly learns edge coefficients $\widehat{A}_{i,j}$, each representing the direct influence of a parent SAE feature on a child node, we can ablate a specific edge by setting its coefficient to zero, thereby removing that dependence within our linear surrogate $\widehat{m} = \widehat{A}_{L,y}^\top \vz_L$.\footnote{We do not apply edge ablation to \textsc{shift}, which does not provide edge-level correlations between SAE features and the output.}  The bottom two plots of Figure~\ref{fig:fc} report these results.  The conclusions mirror those of node ablation: a small subset of essential edges, together with their corresponding SAE features, governs the model's prediction behavior.
\subsection{Bias-in-Bios: Full Setup and Results}\label{app:bib_full}

We expand on the summary in Section~\ref{sec:dg}.

\textbf{Data and task.}
The Bias-in-Bios dataset (BiB)~\citep{de2019bias} consists of professional biographies with the task of classifying an individual's profession.  The dataset encodes a spurious semantic concept, gender, based on which two subsets are constructed: an \emph{ambiguous} set, where profession and gender are strongly correlated (e.g., all professors are assumed to be male, while nurses are assumed to be female), and a \emph{balanced} set, where profession and gender are independent (equal numbers of male professors, male nurses, female professors, and female nurses).  The goal is to produce a profession classifier that performs accurately on the balanced set but is not biased by the spurious gender signal in the ambiguous set.  While \citet{markssparse} demonstrate their approach on Pythia-70M and Gemma-2-2B, we extend their evaluation to Gemma-2-9B.

\textbf{Baselines.}
We compare against:
(i)~\textsc{original}, a predictor trained directly on the ambiguous set;
(ii)~\textsc{oracle}, a predictor trained on the balanced set, included as a (non-comparable) upper bound;
(iii)~\textsc{cbp}, concept bottleneck probing~\citep{yan2023robust};
(iv)~\textsc{shift}, spurious human-interpretable feature trimming~\citep{markssparse};
(v)~\textsc{shift}-retrain, the variant in which the linear classifier is retrained on the ablated features;
and
(vi)~a \emph{linear probing}~\citep{gurnee2023finding} trained directly on the same SAE features with zero ablation to eliminate the irrelevant features.
For \textsc{shift}, we use the SAE-feature variant with manual inspection and exclude the neuron-level and unsupervised variants due to their consistently inferior performance reported in the original paper.

\textbf{\mn{} for domain generalization.}
We select SAE features from a single layer of each LLM (e.g., the transformer output at layer~22 of Gemma-2-2B).  For each prompt we average the $D$-dimensional SAE features over tokens, producing $\mZ_s \in \sR^{D \times M}$.  Substituting $\mZ_s$ and the target $\vy \in \sR^M$ into Eq.~(\ref{eq:layer2y_obj}) estimates the weighted importance vector $\widehat{A}_{s,y} \in \sR^D$, which is equivalent to training a sparse linear classifier on the SAE features.  We rank features by $|\widehat{A}_{s,y}|$, manually identify gender-correlated features via the multi-prompt and single-prompt procedures, set their values to zero, and feed the result either directly into the trained classifier (\mn{}) or into a freshly retrained linear classifier (\mn{}-retrain).

\textbf{Full results.}
Table~\ref{tab:bib_pythia_gemma} reports profession accuracy, gender leakage (closer to 50 is better), and worst-group accuracy across the three LLMs.  \mn{} and \mn{}-retrain achieve profession accuracies comparable to, and in some cases slightly better than, \textsc{shift} and \textsc{shift}-retrain; gender-leakage values stay near the 50\% balanced target; and worst-group accuracies match or exceed the strongest non-\textsc{oracle} baseline.  Efficiency results (Table~\ref{tab:bib_pythia_gemma_eff}) further underscore the strengths of \mn{}: it requires fewer features and substantially less runtime than \textsc{shift}, particularly for large models.

\end{document}